\newcommand\E{\mathbb{E}}
\newcommand\R{\mathbb{R}}
\renewcommand\P{\mathbb{P}}
\newcommand{\xb}{\mathbf{x}}
\newcommand{\wb}{\mathbf{w}}
\newcommand{\VS}{\mathsf{VS}}
\newcommand\sign{\text{sign}}
\newcommand\test{\text{test}}
\newcommand{\Er}{\mathcal{E}}
\newtheorem{theorem}{Theorem}
\newtheorem{conjecture}{Conjecture}
\newtheorem{definition}{Definition}
\newtheorem{corollary}{Corollary}
\newtheorem{lemma}{Lemma}
\definecolor{darkblue}{HTML}{2f4b7c}
\definecolor{pinkred}{HTML}{f95d6a}
\title{Good Classifiers are Abundant in \\the Interpolating Regime}
\author{%
  Ryan Theisen \\
  Department of Statistics\\
  University of California, Berkeley\\
  \texttt{theisen@berkeley.edu} \\
  \and 
  Jason M. Klusowski\\
Department of Operations Research \& \\ Financial Engineering\\
Princeton University\\
\texttt{jason.klusowski@princeton.edu}
  \and 
  Michael W. Mahoney\\
ICSI and Department of Statistics\\
University of California, Berkeley\\
\texttt{mmahoney@stat.berkeley.edu}
}
\begin{document}

\date{}

\maketitle

\begin{abstract}
\noindent Within the machine learning community, the widely-used uniform convergence framework has been used to answer the question of how complex, over-parameterized models can generalize well to new data. This approach bounds the test error of the \emph{worst-case} model one could have fit to the data, but it has fundamental limitations. Inspired by the statistical mechanics approach to learning, we formally define and develop a methodology to compute precisely the full distribution of test errors among interpolating classifiers from several model classes. We apply our method to compute this distribution for several real and synthetic datasets, with both linear and random feature classification models. We find that test errors tend to concentrate around a small \emph{typical} value $\varepsilon^*$, which deviates substantially from the test error of the worst-case interpolating model on the same datasets, indicating that ``bad'' classifiers are extremely rare. We provide theoretical results in a simple setting in which we characterize the full asymptotic distribution of test errors, and we show that these indeed concentrate around a value $\varepsilon^*$, which we also identify exactly. We then formalize a more general conjecture supported by our empirical findings. Our results show that the usual style of analysis in statistical learning theory may not be fine-grained enough to capture the good generalization performance observed in practice, and that approaches based on the statistical mechanics of learning may offer a promising~alternative.
\end{abstract}

\section{Introduction}
The phenomenon of good generalization in highly over-parameterized models, including neural networks, has largely eluded theoretical understanding. 
Recently, however, progress has been made towards understanding over-parameterization in several simpler settings. 
Important examples include the variety of results demonstrating ``double descent'' phenomena in linear regression \cite{Belkin2019ReconcilingTrade-off, Bartlett2020BenignRegression, Hastie2019SurprisesInterpolation,derezinski2019exact} (and, in particular, how it is essentially a consequence of a transition between two different phases of learning~\cite{ZCM20_TR}), nearest neighbors models \cite{Xing2019BenefitAlgorithmsb}, and binary classification \cite{Chatterji2020Finite-sampleRegime, 9053524}. 
These results are typically derived by defining a specific estimator (e.g., the least-norm estimator in linear regression), and carefully examining its test risk. 
This approach presents a challenge when extending these analyses to the setting of neural networks, where no such estimator can easily be defined. 
In these situations, almost all results rely, in one way or another, on the framework of \textit{uniform convergence}; that is, results which bound a quantity of the~form
\begin{align}
    \varepsilon_{\text{unif}} := \sup_{f\in\mathcal{F}} \left | \widehat{\Er}_n(f) - \Er(f) \right|  ,
    \label{eqn:uniform-convergence}
\end{align}
where $\mathcal{F}$ is a given function class, $\widehat{\Er}_n$ is the training error on a dataset of $n$ points, and $\Er$ is the population~error. 

\begin{figure}
    \centering
    \includegraphics[scale=.5]{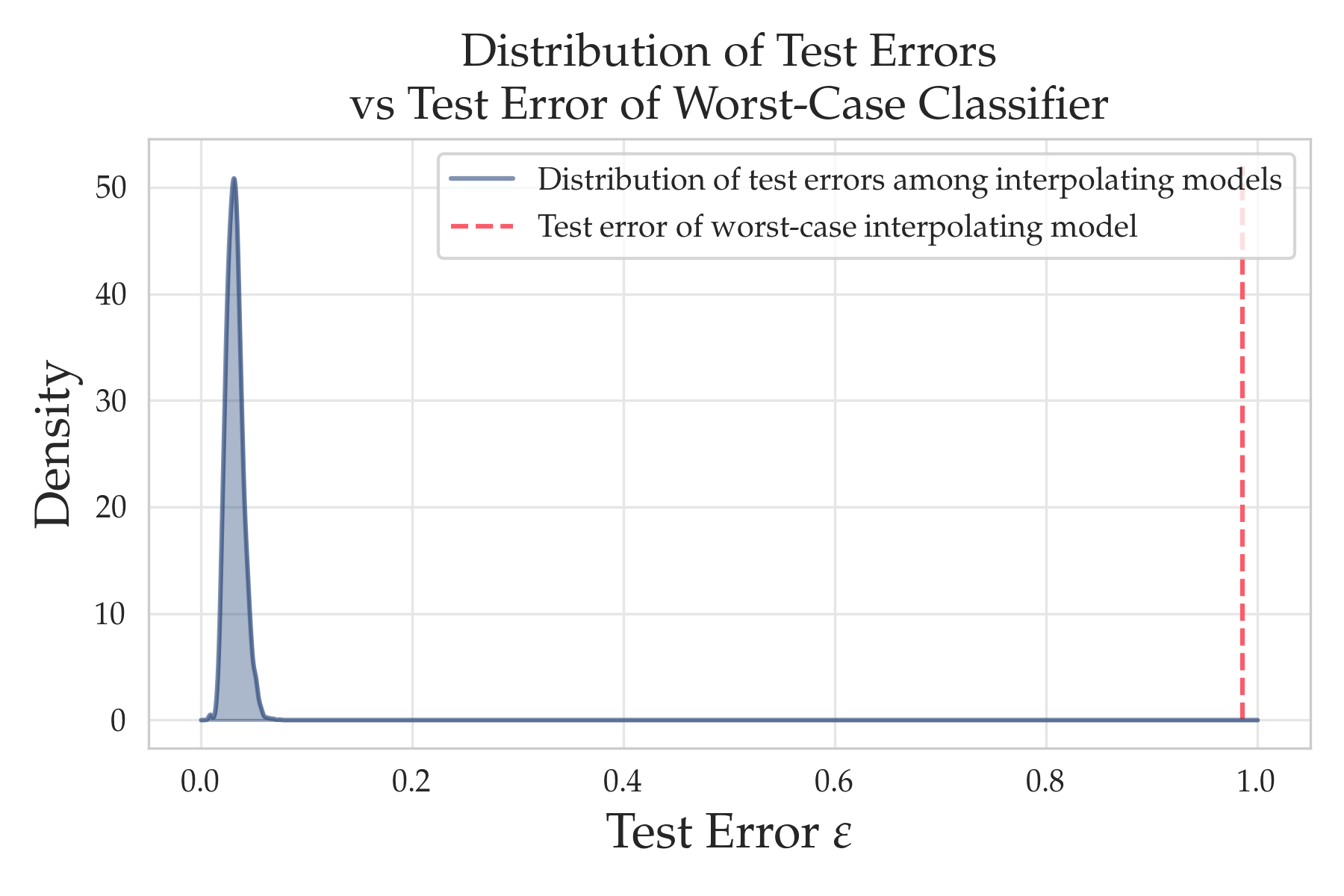}
    \caption{Test error distribution of MNIST 0 vs 1 interpolating classifiers, using $N = 1000$ random ReLU features, with $n = 500$ training samples, as well as test error of worst-case interpolating classifier. Here, for illustrative purposes, we plot the PDF (fit from a histogram using a kernel density estimate); in the remainder of the paper, we instead plot the CDFs, which can be more accurately estimated.}
    \label{fig:pdf_figure}
\end{figure}

Recently, it has been drawn into question whether this approach is fine-grained enough to capture the good generalization properties observed in deep learning \cite{MM17_TR,Nagarajan2019UniformLearning}. 
One issue that arises when using the uniform convergence framework is that for any given training set $\{(\xb_1,y_1),\dots,(\xb_n,y_n)\}$, and a sufficiently complex function class $\mathcal{F}$, the worst-case estimator $f\in\mathcal{F}$ fitting the training data may indeed perform quite poorly---thus dooming quantities like (\ref{eqn:uniform-convergence})---even if we are extremely unlikely to encounter such models in practice. 
One line of work has attempted to tackle this problem by studying the implicit biases of the algorithms used to train modern machine learning models \cite{Gunasekar2018ImplicitFactorization, Ma2020ImplicitDeconvolution, Soudry2018TheData} (by using what may be called implicit regularization in non-exact approximation algorithms~\cite{Mah12}). 
Still, such results are mostly limited to simplified settings, and a comprehensive understanding of the relationship between optimization and generalization remains~elusive.

In another line of work \cite{Wu2017TowardsLandscapes, ChoromanskaLossSurface}, it has been observed that, at least in practice for deep networks, it is not particularly important which model we obtain at the end of training; most models tend to have roughly the same test error. 
Reconciling this phenomenon with the worst-case theory must then require one of a few things to be true: i) that most models have nearly worst-case test error; ii) that models with nearly worst-case error are very rare; or iii) that worst-case bounds are simply too loose to capture the actual worst-case error. 
In this paper, we investigate these possibilities rigorously in the setting of linear and random feature classification, and we find that worst-case models with very high test error do in fact exist, but that they are exceedingly~rare.

Our approach builds conceptually on several old ideas originating out of the statistical physics literature. 
(Such a perspective, while less common in statistical learning theory today, has a long history~\cite{MM17_TR,SST92,WRB93,Haussler1996RigorousMechanics,Engel2001StatisticalLearning}.) Rather than studying the \textit{worst-case} estimator $f\in \mathcal{F}$, the statistical mechanics approach seeks to understand the behavior of the \textit{typical} function $f$. 
This typicality can be characterized in a number of ways.
A natural measure, from the statistical physics perspective, would be the \textit{entropy} (or log density of states), which captures the number of models at any given test error value. 
Analyses of learning problems have been conducted using the entropy method in a variety of simplified settings, including the case of finite $\mathcal{F}$ as well as linear classification under various simplifying assumptions on the data \cite{Haussler1996RigorousMechanics, Opper1991CalculationNoise, Engel2001StatisticalLearning}. 
Similar approaches have also been used to demonstrate the existence of phase transitions in learning behavior in logistic regression \cite{CandesLogisticRegression} and generalized linear models \cite{Barbier5451}. 
In the deep learning literature, \cite{ChoromanskaLossSurface} used the theory of spin glasses to argue that poor local minima on the training surface are rare. 
While insightful (and often technically impressive), many of these theoretical results rely on very specific assumptions on the data generating process, and hold only in the asymptotic~regime. 

In this paper, we study the behavior of test errors on real-world datasets used in practice, in a non-asymptotic regime, and without any assumptions on the data generating process. 
To do this, in Section \ref{section:definitions}, we formally define and develop a methodology to compute precisely the full distribution of test errors among interpolating classifiers from several model classes. 
In Sections \ref{section:linear-classification} and \ref{section:random-relu-features}, we then apply this methodology to compute these distributions for several real and synthetic datasets, and for both linear and random feature classification models, respectively. 
We furthermore develop a method to estimate the worst-case test errors of these classification models on the same datasets. 
Our investigation yields the following key insights:
\begin{enumerate}[nosep]
    \item Good classifiers are abundant: an overwhelming proportion of interpolating models have very small test error, relative to the worst-case error.
    \item Test errors tend to concentrate: as the size of models grow, test errors concentrate sharply around a critical value $\varepsilon^*$.
    \item There exist worst-case classifiers that are very poor: much worse than the typical classifier.
\end{enumerate}
These findings are illustrated in Figure~\ref{fig:pdf_figure}. 

To understand these observations mathematically, in Section \ref{section:theory}, we provide theoretical results in a simple setting in which we characterize the full (asymptotic) distribution of test errors, and we show that these indeed concentrate around a value $\varepsilon^*$, which we also identify exactly. 
We then formalize a more general conjecture, supported by our empirical findings, which we hope will motivate further research. 
Finally, in Section \ref{section:conclusions}, we offer some concluding thoughts, and provide several promising directions for future work. 
%
% MM NOTE: THE FOLLOWING IS DIFFERENT FOR TR VERSION VERSUS AISTATS VERSION.
%
Proofs and additional empirical results can be found 
%
% MMNOTE: UNCOMMENT THE FOLLOWING FOR TR; COMMENT OUT FOR AISTATS.
in the appendix.
%
% MM NOTE: UNCOMMENT THE FOLLOWING FOR AISTATS: COMMENT OUT FOR TR.
%%in the technical report version of this paper \cite{GoodClassifiers_TR}.

\section{Efficiently Computing the Distribution of Test Errors for Interpolating Classifiers}
\label{section:definitions}

\subsection{Notation and Setup}
We begin with some notation that will be used throughout the paper.

We consider the setting of binary classification, and denote a training dataset by $S_n = \{(\xb_1,y_1),\dots,(\xb_n,y_n)\}$, with samples $\xb_i \in \R^d$ and labels $y_i \in \{-1,1\}$. 
We let $\mathcal{F}$ be a class of functions $f:\R^d\rightarrow \{-1,1\}$, and we define the \textit{version space} to be the following subset of~$\mathcal{F}$:
\begin{align}
    \VS(S_n) = \{f \in \mathcal{F} : f(\xb_1) = y_1,\dots,f(\xb_n) = y_n\}.
\end{align}
That is, the version space is the set of ``interpolating'' functions, i.e., those which perfectly fit the dataset $S_n$. Note that if $ \mathcal{F} $ is a linear family, then one element of the version space is the max-margin solution.
We also use $\P$ to denote a probability measure defined over $\mathcal{F}$.
We use $S_{\test} = \{(\xb_{n+1},y_{n+1}),\dots,(\xb_{n+m},y_{n+m})\}$ to denote a set of $m$ testing points, and $\Pr_{\xb,y}$ to denote a testing distribution over the data $(\xb,y)$. Using these, we define the empirical and population testing~errors:
\begin{align}
    \Er_m(f) &= \frac{1}{m}\sum_{h=1}^m \mathbb{1}(-y_{n+h} f(\xb_{n+h}) > 0),\\ 
    \Er(f) &= \Pr_{\xb,y}(-yf(\xb) > 0).
\end{align}

With these definitions in place, we can now formally define the test error distribution of interpolating~classifiers.

\begin{definition}
\label{def:rnmeps_rneps}
Given a function class $\mathcal{F}$, a measure $\P$ over $\mathcal{F}$, and a training set $S_n$, let
\begin{align}
    R_{n,m}(\varepsilon) := \frac{\P(\{\Er_m(f) \leq \varepsilon\} \cap \VS(S_n))}{\P(\VS(S_n))},
\end{align}
and
\begin{align}
R_{n}(\varepsilon) := \frac{\P(\{ \Er(f) \leq \varepsilon\} \cap \VS(S_n))}{\P(\VS(S_n))}.
\end{align}
\end{definition}

That is, the quantities $R_{n,m}(\varepsilon)$ and $R_n(\varepsilon)$ are the cumulative distribution functions (CDFs) of the errors $\Er_m$ and $\Er$, conditioned on perfectly fitting the training data. Intuitively, these quantities measure the fraction of interpolating classifiers $f\in \VS(S_n)$ that have test error at most~$\varepsilon$.

\subsection{Efficient Estimation of $R_{n,m}$}
An advantage of our definition of $R_{n,m}(\varepsilon)$ is that it is defined only relative to fixed training and testing sets, $S_n$ and $S_{\text{test}}$.
This means that, at least in principle, $R_{n,m}(\varepsilon)$ can be computed exactly (without explicit knowledge of the training and testing distributions). 
To do this na\" ively would require computing the ratio of two (in general very small) high-dimensional volumes, which would be costly and also lead to issues with numerical instability. Instead, a natural estimator for $R_{n,m}(\varepsilon)$ can be generated as follows: sample $\hat{f}_1,\dots, \hat{f}_M \sim \P(\cdot \mid \VS(S_n))$, and compute
$$
\widehat{R}_{n,m}(\varepsilon) = \frac{1}{M}\sum_{j=1}^M \mathbb{1}(\Er_m(\hat{f}_j)\leq \varepsilon).
$$
Standard Gilvenko-Cantelli-type results can be used to guarantee that $\sup_{\varepsilon} |R_{n,m}(\varepsilon) - \widehat{R}_{n,m}(\varepsilon)| = O(\frac{1}{\sqrt{M}})$.
Hence, assuming we have the ability to sample from $\P(\cdot \mid \VS(S_n))$, the distribution $R_{n,m}(\varepsilon)$ can be estimated to arbitrary precision.

For the remainder of this section, we show how we can generate samples $\hat{f} \sim \P(\cdot\mid \VS(S_n))$ for any function class of the form $\mathcal{F}_\phi = \{f(\xb) = \sign(\wb^\top \phi(\xb)) : \wb\in \R^N\}$, where $\phi:\R^d \rightarrow \R^N$ is any mapping. In this paper, we will address the following important examples:
\begin{align*}
    &\phi(\xb) = \xb, &&(\text{linear classification}) \\
    &\phi(\xb) = \sigma(\mathbf{U}\xb). &&(\text{random features})
\end{align*}

Notice that for these classes of functions, a probability measure $\P$ over $\mathcal{F}$ is simply a distribution over $\R^N$. 
Throughout this paper, we will assume that $\P$ is the uniform distribution on the sphere $\mathbb{S}^{N-1} = \{\wb \in \R^N : \|\wb\|=1\}$. This choice is made so as to obtain results that are agnostic to the choice of optimization algorithm: since any reasonable measure on the sphere will be absolutely continuous with respect to $\P$, we do not expect our main conclusions to be qualitatively changed by choosing a different base distribution.
For the sake of computation, it will be convenient to make use of the equivalence (up to scaling) of the uniform distribution with the Gaussian distribution $\mathcal{N}(\mathbf{0},\mathbf{I})$, which is a consequence of the spherical symmetry of the~Gaussian.

Let us define the function 
\begin{align}
    \mathcal{L}_n(\wb) = \prod_{i=1}^n \mathbb{1}(y_i\wb^\top\phi(\xb_i)\geq 0),
\end{align} 
and notice that $\P(\cdot \mid \VS(S_n)) = \P(\cdot \mid \mathcal{L}_n=1)$. Therefore, we are interested in drawing samples from a linearly constrained Gaussian distribution.
Fortunately, the recent work~\cite{Gessner2020IntegralsConstraints} developed the \textsc{lin-ess} algorithm (an extension of Elliptical Slice Sampling~\cite{Murray2010EllipticalSampling}) specifically for this purpose. 
Using traditional Monte Carlo methods, this task would be computationally infeasible in high dimensions, since if we na\"ively drew samples from $\P$ and rejected those not lying in the domain $\{\mathcal{L}_n(\wb) = 1\}$, then drawing a reasonable number of samples could take an exponential amount of time. 
In contrast, \textsc{lin-ess} is able to exploit special properties of the linear constraints $y_i \wb^\top \phi(\xb_i) \geq 0$ to draw samples \textit{without rejection}. 
In particular, in our setup, \textsc{lin-ess} can be used to generate samples $\widehat{\wb}_1,\dots,\widehat{\wb}_M \sim \P(\cdot \mid \mathcal{L}_n=1)$, which we can then use to compute the estimator $\widehat{R}_{n,m}(\varepsilon)$. As is the case with most MCMC algorithms, \textsc{lin-ess} is only guaranteed to produce independent samples from the posterior $\P(\cdot\mid \mathcal{L}_n = 1)$ asymptotically; we mitigate this issue in practice by using 1,000 warm-up samples, and keeping only every 10th sample~thereafter.

\section{Linear Classification}
\label{section:linear-classification}
In this section, we compute the estimated test error distributions $\widehat{R}_{n,m}(\varepsilon)$ and $\widehat{R}_n(\varepsilon)$ on both real benchmark data as well as illustrative synthetic data, for the class $\mathcal{F}_\text{LIN} = \{f(\xb) = \sign(\wb^\top \xb) : \wb\in\R^d\}$ of linear~classifiers.

%\vspace{-2mm}
% \subsection{Evaluation on image datasets}
% %\vspace{-2mm} 

\begin{figure}
    \centering
    \includegraphics[scale=.55]{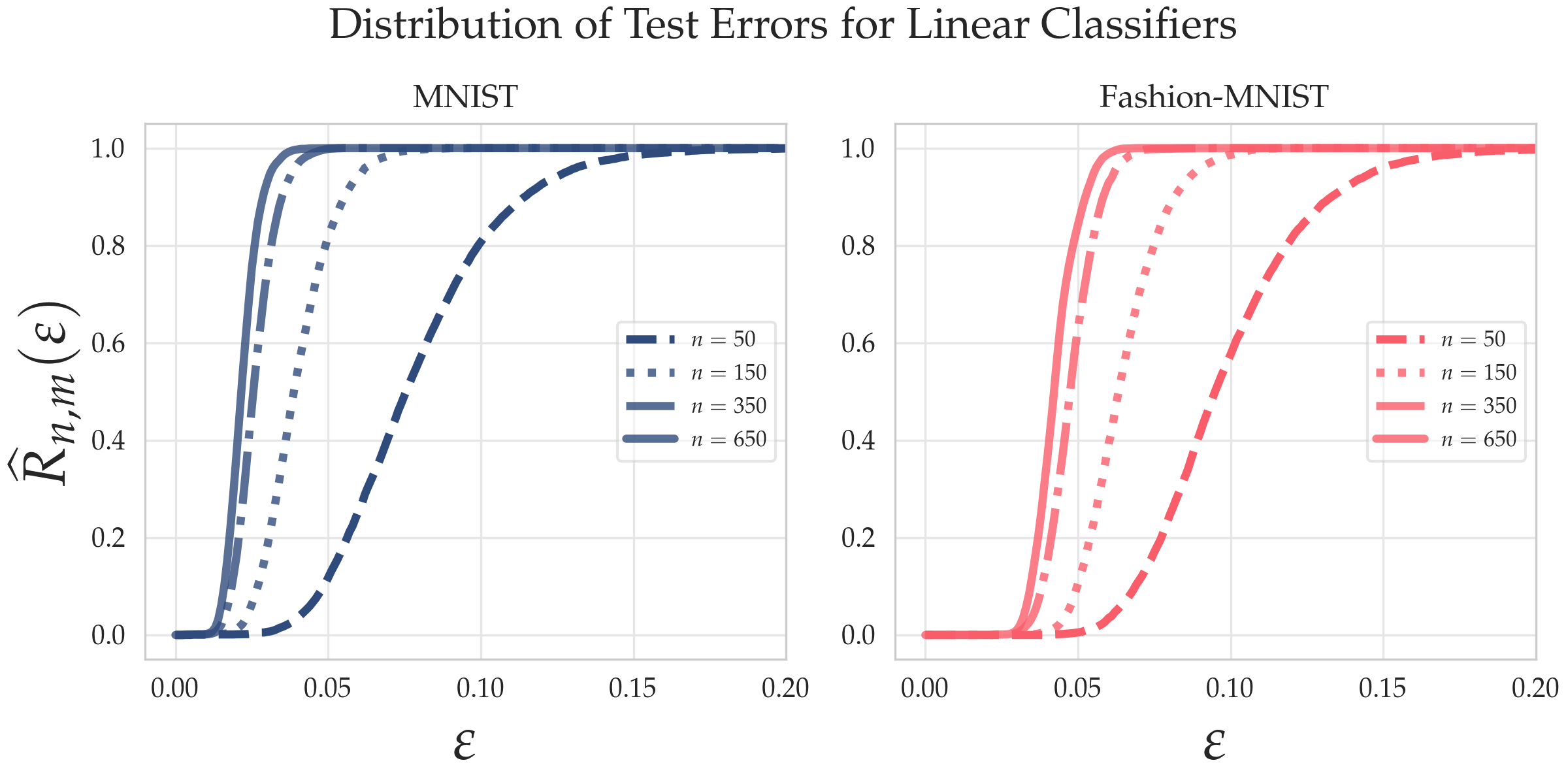} % tr
    \caption{Estimated test error distribution $\widehat{R}_{n,m}(\varepsilon)$ for interpolating linear classifiers on the \textsc{mnist} (0 vs 1) dataset \textbf{(\textcolor{darkblue}{blue})} and \textsc{fashion-mnist} (shirt vs pants) dataset \textbf{(\textcolor{pinkred}{red})}.}
    \label{fig:estimated-Rmn-eps-mnist}
\end{figure}

\subsection{Evaluation on Image Datasets}
\label{section:linear-image-data}
For our first set of evaluations, we compute $\widehat{R}_{n,m}(\varepsilon)$ for high-dimensional image datasets used in modern machine learning. 
In particular, we focus on the \textsc{mnist} and \textsc{fashion-mnist} datasets, which consist of images in $d=784$ dimensional space. Thus, throughout this section, we only consider values of $n<784$. 
Since we are specialized to the binary classification setting, we focus on the \textsc{mnist} 0 vs 1 task, and on the shirt vs pants task for \textsc{fashion-mnist}. For both of these tasks, the data has been centered and scaled, so as to have mean 0 and variance 1. 

In Figure \ref{fig:estimated-Rmn-eps-mnist}, we plot the $\widehat{R}_{n,m}(\varepsilon)$ for various values of $n$. For each of the plots in this section, estimators $\widehat{R}_{n,m}(\varepsilon)$ are formed with $M = 10,000$ samples from $\P(\cdot \mid \mathcal{L}_n=1)$ using the \textsc{lin-ess} algorithm, and they are evaluated on $m=5000$ testing points.

\textbf{Observation 1: Good classifiers are abundant.} 
Our first observation is that, for reasonable $n$, most interpolating classifiers have good\footnote{Of course, one could fit a model from a more complicated function class and obtain even better test performance.} test set performance. 
For example, for the \textsc{mnist} dataset, we see that at $n=350$, nearly 100\% of the models that perfectly fit the training data achieve at least 95\% ($\varepsilon = 0.05$) test accuracy. 
This indicates that, for this particular training set, bad classifiers (with error $>5\%$) make up a set with very small measure.
On the other hand, for the \textsc{fashion-mnist} task, only about 60\% of classifiers perfectly fitting the training data get 95\% test performance at $n=350$ samples, but nearly 100\% of such classifiers get 92\% accuracy.

\begin{figure}[t]
\centering
%\rule{0.9\linewidth}{0.75\linewidth}
\includegraphics[scale=.55]{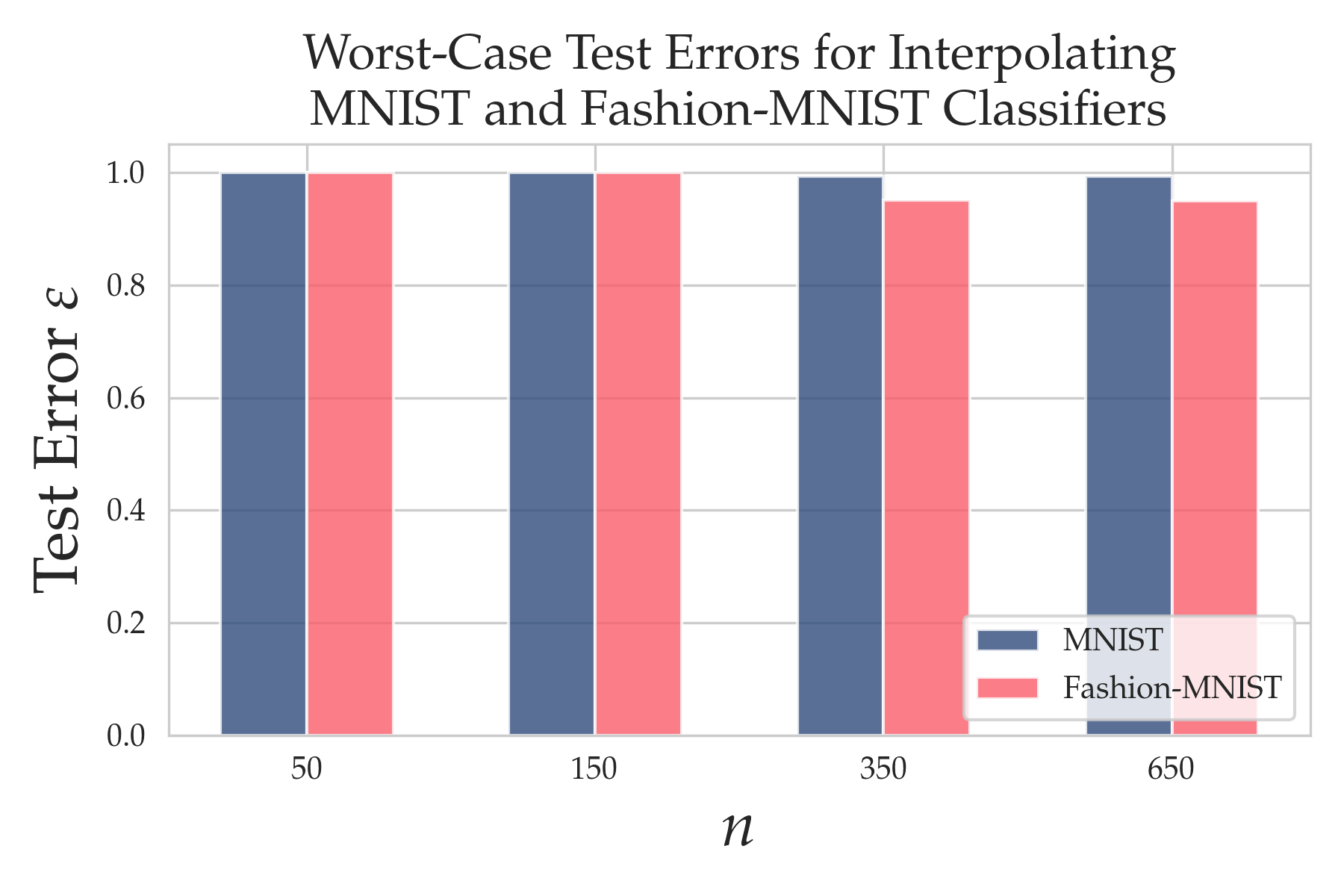}
    \caption{Test errors of interpolating classifiers with fit to $n$ ``good'' training samples and $n_b = (d-1)-n$ ``bad'' training samples. The classifiers constructed here have extremely poor test set performance, in contrast to results shown in Figure~\ref{fig:estimated-Rmn-eps-mnist}.}
    \label{fig:test-error-corrupted-labels}
\end{figure}

\textbf{Observation 2: Existence of bad classifiers.} 
A natural question that may arise out of these results is whether or not bad interpolating classifiers even exist for these tasks, at least for the parameter settings we consider. 
Here, we demonstrate a simple method for finding bad classifiers which, together with the previous results, shows that bad classifiers exist and constitute a tiny fraction of the version space. 
Given a dataset $S_n$, with $n< d$, we can append up to $n_b \leq (d-1)-n$ ``bad'' samples, to form a new dataset $S_n'$ with $n' = n+n_b$ samples. 
Notice that any model $\wb \in \VS(S_n')$ must also belong to $\VS(S_n)$, since $\VS(S_n') \subseteq \VS(S_n)$.
Here, we construct $n_b = (d-1)-n$ ``bad'' points lying in the span
%\footnote{We explain in more detail why this works in the Supplementary Material.} 
of the set $\{-y_1\xb_1,\dots, -y_n\xb_n\}$. 
In Figure~\ref{fig:test-error-corrupted-labels}, we plot the test error of interpolating classifiers constructed in this manner, fit using gradient descent with a logistic loss, for varying levels of~$n$. 
We see that this method finds classifiers with test error that is nearly 1 for all values of $n$ considered.

\textit{We are therefore left with an insightful contrast:} in Figure \ref{fig:estimated-Rmn-eps-mnist}, we observe that, for example, at $n=350$, the set of interpolating \textsc{mnist} classifiers with test accuracy $\geq 95\%$ comprise a set of measure essentially 1; while in Figure \ref{fig:test-error-corrupted-labels}, we have demonstrated that there \textit{exist} interpolating classifiers for this task with test accuracy nearly $0\%$. Thus, we see that the performance of the worst-case classifier gives basically no insight into the performance of the typical classifier, indicating that a uniform convergence-type analysis is not appropriate in this setting. This is also information that cannot be gleaned by looking at a summary statistic, like the \emph{expected} test error of interpolating classifiers, i.e., $ \E[\mathcal{E}_m(\wb) \mid \VS(S_n)] $, alone—it is necessary to consider the full distribution.

%\vspace{-4mm}
%\vspace{-3mm}
\subsection{Evaluation on Synthetic Datasets}
\begin{figure*}[t] %[h]
    \centering
    \includegraphics[scale=.5]{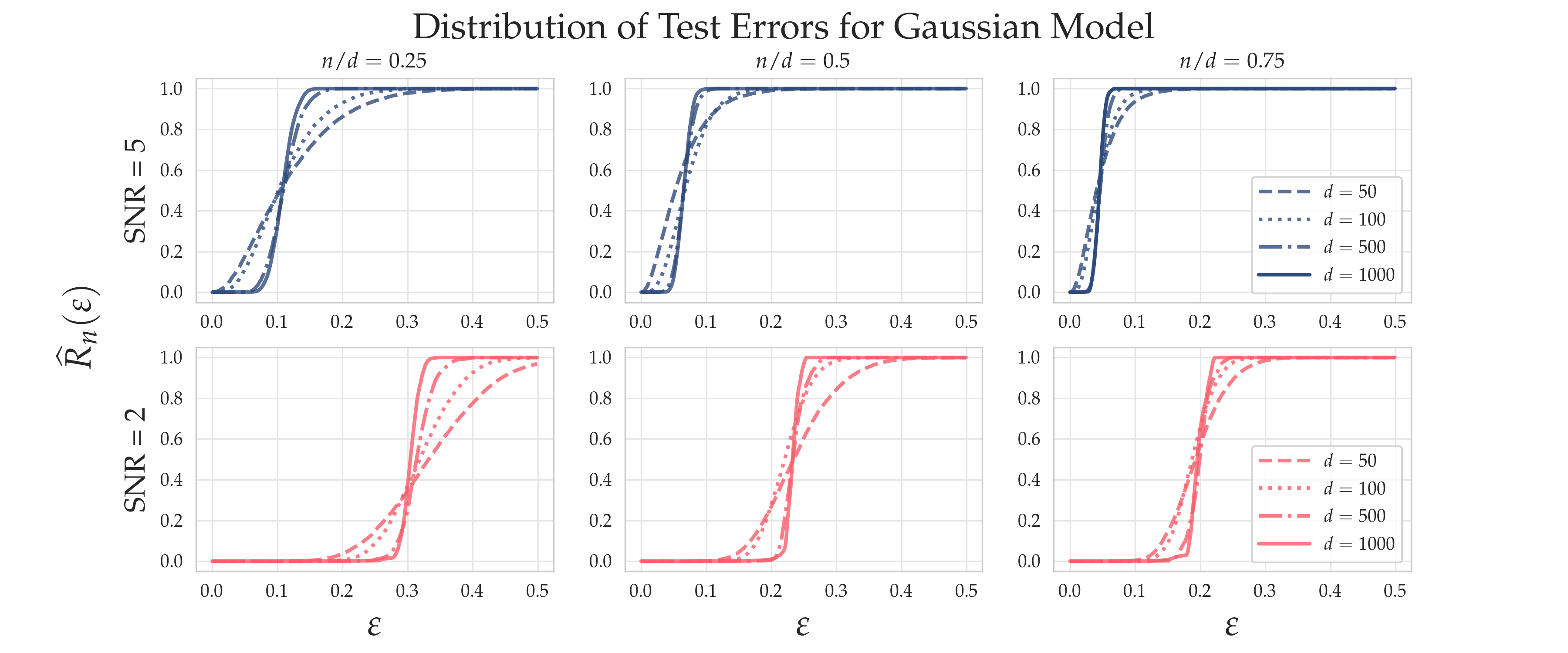}
    \caption{Plotting $\widehat{R}_n(\epsilon)$ for the Gaussian model (\ref{gaussian-data-model}) at various levels of $d$. \textbf{\textcolor{darkblue}{Blue}} curves correspond to SNR $= 5$, \textbf{\textcolor{pinkred}{red}} curves correspond to SNR $= 2$.}
    \label{fig:gaussian-data}
\end{figure*}
%\vspace{-2mm}
\label{section:gaussian-data-experiments}

For our next set of evaluations, we compute $R_n(\varepsilon)$ for synthetic data generated from the Gaussian mixture distribution
\begin{align}
(\xb, y) \sim \frac{1}{2}(N_+, 1) + \frac{1}{2}(N_-, -1). ,
\label{gaussian-data-model}
\end{align}
where $N_+ \sim \mathcal{N}(\mu, \Sigma) $, $ N_- \sim \mathcal{N}(-\mu, \Sigma)$ and $\mu \in \R^d $, $ \Sigma \in \mathcal{S}^{d}_+$. 
The purpose of this synthetic model is twofold.  First, it allows us to demonstrate the ubiquity of the phenomena observed on the \textsc{mnist} and \textsc{fashion-mnist} tasks. 
Second, it allows us to investigate the effect of varying the dimension $d$, which we could not do on the datasets studied in the previous section, as this was fixed at $d=784$. This reveals that test errors begin to concentrate around a value $\varepsilon^*$ as the dimension $d$ increases.

For this model, we have that $y\xb \sim \mathcal{N}(\mu, \Sigma)$, so we can characterize the set $\{\wb : \Er(\wb)\leq \varepsilon\}$ with the~condition
\begin{align}
\Er(\wb) \leq \varepsilon \iff \frac{\wb^\top \mu}{\sqrt{\wb^\top \Sigma \wb}} \geq -\Phi^{-1}(\varepsilon),
\end{align}
where $ \Phi(\cdot) $ is the CDF of a $ \mathcal{N}(0, 1) $ distribution.
Given a training set $S_n$ and samples $\widehat{\wb}_1,\dots, \widehat{\wb}_M \sim \P(\cdot \mid \VS(S_n))$, this expression allows us to compute an estimate $\widehat{R}_n(\varepsilon) = \frac{1}{M}\sum_{j=1}^M \mathbb{1}(\Er(\widehat{\wb}_j)\leq \varepsilon)$ in a  straightforward manner. 

As with many Gaussian models, the signal-to-noise ratio (SNR), which we define as $\sqrt{\mu^\top \Sigma^{-1}\mu}$ (or simply $\|\mu\|/\sigma$ when $\Sigma = \sigma^2I$), controls much of the complexity of this task. In Figure \ref{fig:gaussian-data}, we plot $\widehat{R}_n(\varepsilon)$ for $d = 50, 100, 500, 1000$, and with SNR $=2,5$. For these experiments, we take $\Sigma = I$ and, to keep the SNR constant as we vary the dimension, we set $\mu = (\text{SNR}/\sqrt{d},  \dots , \text{SNR}/\sqrt{d})^\top$. 

\begin{figure*}[t] %[h]
    \centering
    \includegraphics[scale=.5]{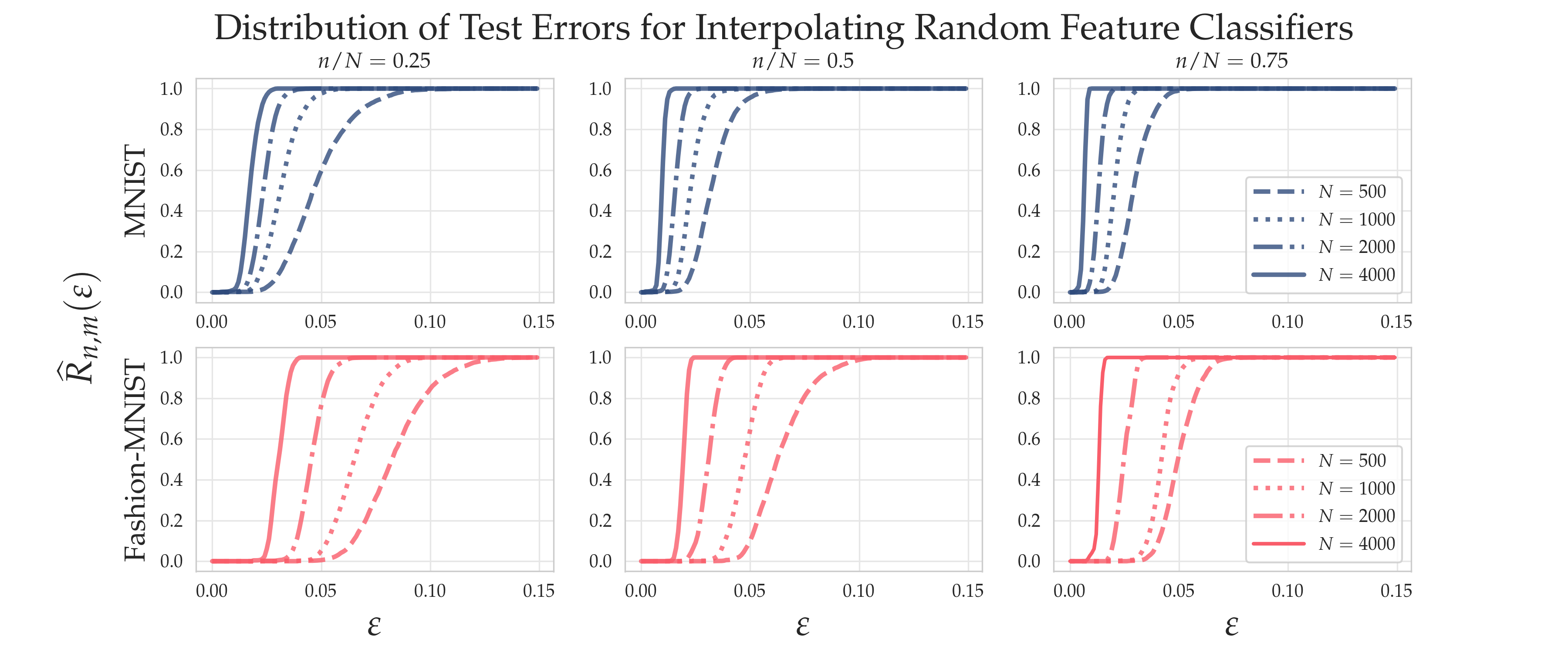}
    \caption{Plotting $\widehat{R}_{n,m}(\epsilon)$ for the random ReLU feature models on \textsc{mnist} (0 vs 1) dataset (\textbf{\textcolor{darkblue}{blue}}) and \textsc{fashion-mnist} (shirt vs pants) dataset (\textbf{\textcolor{pinkred}{red}}).}
    \label{fig:random-relu}
\end{figure*}

\textbf{Observation 3: Concentration at critical value $\varepsilon^*$.} 
Our main observation here is the existence of a critical value $\varepsilon^*$ around which test errors eventually concentrate. Indeed, we see in Figure \ref{fig:gaussian-data} that as $d$ grows, the distributions $R_n(\varepsilon)$ seem to approach the threshold function $\mathbb{1}(\varepsilon \geq \varepsilon^*)$ at a critical value $\varepsilon^*$, which depends on the aspect ratio $\alpha = n/d$. Therefore, in the large $d$ regime, almost all interpolating classifiers have test error exactly $\varepsilon^*$, and so this critical value almost completely characterizes the distribution of test errors for interpolating classifiers. We also observe that this value is largely determined by the value of the SNR. In fact, we can derive a simple lower bound on the value of $\varepsilon^*$:  
\begin{align}
\varepsilon^* \geq \Phi(-\sqrt{\mu^\top \Sigma^{-1} \mu}).
\end{align}
This corresponds to the error of the optimal Bayes classifier $\wb^\star = \Sigma^{-1}\mu$. In the next section, we observe a similar phenomenon for image classification tasks with random feature~models.

\section{Random ReLU Features}
\label{section:random-relu-features}

In this section, we consider the class of random ReLU feature classifiers $\mathcal{F}_{\text{RRF}} = \{f(\xb) = \sign(\wb^\top \phi(\xb)) : \wb\in\R^N\}$, where $\phi(\xb) = \sigma(\mathbf{U}\xb): \R^d \mapsto \R^N$. Here the rows $\mathbf{u}_1,\dots,\mathbf{u}_N$ of $\mathbf{U}$ are drawn from the uniform distribution on the sphere $\mathbb{S}^{d-1}$ and $\sigma(z) = \max(z,0)$ is the ReLU activation function. These can be viewed as one-layer ReLU networks with the weights of the first layer fixed, and they are known to enjoy universal approximation properties \cite{YitongRRF}. 

The benefit in studying such a model is that we can examine the behavior of the test error distributions as the number of hidden features $N$ grows large, with $\alpha = n/N$ fixed. This allows us to observe the critical value behavior seen in linear classification with the Gaussian model (\ref{gaussian-data-model}), but this time with the image datasets \textsc{mnist} and \textsc{fashion-mnist}.

In Figure \ref{fig:random-relu}, we plot the test error distributions for interpolating random ReLU classifiers on the \textsc{mnist} and \textsc{fashion-mnist} tasks, for various number of hidden features $N$ and ratios $\alpha = n/N$. Our main observation from these experiments is that, similar to the Gaussian model, as the number of features $N$ grows, the test errors begin to concentrate around values $\varepsilon^* \equiv \varepsilon^*(\alpha)$. Like in the Gaussian model, the critical value depends on i) the difficulty of the task (it is larger for \textsc{fashion-mnist} than for \textsc{mnist}) and ii) the aspect ratio $\alpha = n/N$. This finding indicates that the concentration phenomenon observed in Section \ref{section:gaussian-data-experiments} is quite general, and holds for both real and synthetic~datasets. 

We remark that the same technique used in Section \ref{section:linear-image-data} demonstrates that very poor classifiers also exist for the random ReLU classification models, and hence again verifies that the worst-case analysis of test errors is inappropriate for these models and datasets. 
%\vspace{-1mm}

\section{Characterizing the Distribution of Test Errors in a Simple Model}
%\section{Characterizing the Distribution of Test Errors in a Simplified Setting}
%\vspace{-2mm}
\label{section:theory}

In this section, we present a simple model, and we prove that it exhibits the main qualitative properties we observed in Sections \ref{section:linear-classification} and \ref{section:random-relu-features}.

A full mathematical characterization of $R_{n,m}(\varepsilon)$ and/or $R_n(\varepsilon)$ is a challenging task.
To see why, let us define the random variables $\zeta_i = y_i\wb^\top \phi(\xb_i)$ for $(\xb_i, y_i)\in S_n$ and $\zeta_{n+h} = y_{n+h}\wb^\top \phi(\xb_{n+h})$ for $(\xb_{n+h}, y_{n+h})\in S_{\text{test}}$ (where we emphasize that the randomness is due to $\wb$). 
Then, for example, the normalization term $\P(\VS(S_n))$ can be expressed as
\begin{equation} \label{eq:integral}
\begin{aligned}
    \P(\VS(S_n)) &= \int \prod_{i=1}^n \mathbb{1}(y_i \wb^\top \phi(\xb_i) \geq 0)\P(\text{d}\wb)\\
    &= \P(\zeta_1\geq 0, \zeta_2\geq 0,\dots,\zeta_n\geq 0).
\end{aligned}
\end{equation}
That is, $\P(\VS(S_n))$ can be seen as an orthant probability under the distribution $\P$. 
When $\P = \mathcal{N}(\mathbf{0},\mathbf{I})$, we find that $\zeta = (\zeta_1, \zeta_2,\dots,\zeta_n) \sim \mathcal{N}(\mathbf{0}, \mathbf{A}\mathbf{A}^\top)$, where $\mathbf{A}$ is the $n\times N$ matrix whose $i^{\text{th}}$ row is $(y_i \phi(\xb_i))^\top$ and whose $ (i,j)^{\text{th}} $ entry is $ y_iy_j\phi(\xb_i)^{\top}\phi(\xb_j) $.
Computing such a Gaussian orthant probability for a general covariance matrix is a classical problem, and explicit formulae for them are known only in dimensions $\leq 5$ and in a few other special cases~\cite{Dunnett1955ApproximationsT-Distribution, Steck1962, Abrahamson1964OrthantDistribution}.

Hence, to present a model we can analyze, here we consider a simplified setting where the testing and training samples have a fixed positive correlation with each other, i.e., for fixed $ \rho \in (0, 1] $, 
\begin{equation} \label{eq:cor}
(\mathbf{A}\mathbf{A}^\top)_{ij} = y_iy_j\phi(\xb_i)^{\top}\phi(\xb_j) = \rho,
\end{equation}
for each pair of indices $i\neq j$ in $ S_n \cup S_{\text{test}} $ (where here we assume $ \phi(\xb_i) $ are normalized to have unit $ \ell^2 $ norm, without loss of generality).\footnote{By correlation between data points, we mean $ y_iy_j\phi(\xb_i)^{\top}\phi(\xb_j) $ for $i\neq j$.}
%To allow for more heterogeneity in the data, we model both positive and negative correlations between the test and training data, namely, either $\text{cor}(h, i) \coloneqq y_iy_{n+h}\xb_i^{\top}\xb_{n+h} = \rho > 0$ or $ \text{cor}(h, i) \coloneqq y_iy_{n+h}\xb_i^{\top}\xb_{n+h} = -\rho < 0$ for each pair $ (\xb_{n+h}, y_{n+h}) \in S_{\text{test}} $ and $ (\xb_j, y_j) \in S_n $. 
Under this assumption, we can leverage implicit expressions for the normalizing term $\P(\VS(S_n))$, which makes the problem more amenable to analysis. 

% In this case, the $\zeta_i$ (for both the test and training data) can be represented as $\zeta_i \stackrel{d}{=} \sqrt{1-\rho}Z_i +\sqrt{\rho}Z $, where $ Z, Z_1, Z_2, \dots, Z_n $ are i.i.d. $\mathcal{N}(0, 1)$.
%and $ B_i $ is either $ -1 $ or $ +1 $. 

We remark that to derive asymptotically valid expressions for $ R_n(\varepsilon) $ and $ R_{n,m}(\varepsilon) $, one may be tempted to approximate \eqref{eq:integral} using off-the-shelf techniques for approximating high-dimensional integrals, e.g., Laplace's method. However, there are a number of pitfalls with this approach. First, it is difficult to quantify the approximation errors, and results that do exist are not precise enough for our purposes.
Second, certain conditions for Laplace's method or other standard integral expansions do not hold in our setting.\footnote{For example, the maximum of the function in the exponent of the integrand occurs at infinity.}
Nevertheless, we can leverage special properties of the Gaussian distribution and quantile functions to prove several non-trivial results. 
Henceforth, for sequences $ \{a_n\} $ and $ \{b_n\} $, the notation $ a_n \sim b_n $ means $ a_n = b_n(1+o(1)) $ as $ n \rightarrow \infty $.\footnote{That is, it should not be confused with ``has the probability distribution of'' which uses the same notation.}

Our first result considers the setting of a single testing point $(\xb_{n+1},y_{n+1})$, and it demonstrates 
%1) a fundamental trade-off between the number of positive and negative correlations and 2) 
the effect of a larger correlation $\rho$ on the probability of correctly classifying a new test point. Furthermore, it shows that, at least for this simple setting, we can expect the probability of correctly classifying a testing point to converge to 1 at a $O(1/n)$~rate.

\begin{theorem} \label{eq:supp_theorem}
Suppose we have a single testing point $(\xb_{n+1}, y_{n+1})$, which together with the training data satisfies the correlation structure \eqref{eq:cor}.
%, which has positive correlation $ \rho $ with all of the training samples, which satisfy \eqref{eq:cor}. 
Then, as $ n\rho \rightarrow \infty $,
\begin{equation} \label{eq:supp1}
\P(y_{n+1} = \emph{sign}(\wb^\top \phi(\xb_{n+1})) \mid \mathsf{VS}(S_n)) \sim 1 - \frac{1-\rho}{n\rho}.
\end{equation}
\end{theorem}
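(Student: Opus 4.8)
The plan is to reduce the conditional probability in \eqref{eq:supp1} to a ratio of Gaussian orthant probabilities for the equicorrelated covariance, to rewrite that ratio through the moments of an explicit density on $(0,1)$, and to read off the $1/n$ asymptotics from the behaviour of that density at its right endpoint. First I would set up the reduction. With $\wb\sim\mathcal{N}(\mathbf{0},\mathbf{I})$ (equivalent here, by the scaling argument used above, to the uniform law on the sphere) and $\zeta_i=y_i\wb^\top\phi(\xb_i)$, the hypothesis \eqref{eq:cor} together with $\|\phi(\xb_i)\|=1$ says precisely that $(\zeta_1,\dots,\zeta_{n+1})\sim\mathcal{N}(\mathbf{0},\Sigma_{n+1})$ with $\Sigma_k=(1-\rho)\mathbf{I}_k+\rho\,\mathbf{1}_k\mathbf{1}_k^\top$. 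Since $\{y_{n+1}=\sign(\wb^\top\phi(\xb_{n+1}))\}=\{\zeta_{n+1}>0\}$ and $\VS(S_n)=\{\zeta_1\ge 0,\dots,\zeta_n\ge 0\}$ up to null sets, the left-hand side of \eqref{eq:supp1} equals $P_{n+1}/P_n$, where $P_k:=\P(\zeta_1\ge 0,\dots,\zeta_k\ge 0)$ is the $k$-dimensional equicorrelated orthant probability. The one-factor representation $\zeta_i=\sqrt{\rho}\,Z_0+\sqrt{1-\rho}\,Z_i$ with $Z_0,Z_1,\dots$ i.i.d.\ standard normal then gives the clean formula $P_k=\E[\Phi(cZ_0)^k]$, $c:=\sqrt{\rho/(1-\rho)}$, so that
\[
1-\frac{P_{n+1}}{P_n}=\frac{\E\big[\Phi(cZ_0)^n\,(1-\Phi(cZ_0))\big]}{\E[\Phi(cZ_0)^n]}.
\]

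Next I would pass to a density on $(0,1)$. Setting $T:=\Phi(cZ_0)$, a change of variables gives $T$ the density $g(t)=\dfrac{\varphi(\Phi^{-1}(t)/c)}{c\,\varphi(\Phi^{-1}(t))}=\dfrac1c\exp\!\big(\tfrac12\Phi^{-1}(t)^2(1-c^{-2})\big)$ on $(0,1)$, so $P_k=\int_0^1 t^k g(t)\,dt$ and the target is $1-P_{n+1}/P_n$ expressed through the moments $\int_0^1 t^n g(t)\,dt$ and $\int_0^1 t^n(1-t)g(t)\,dt$. Both integrals concentrate near $t=1$, so only the behaviour of $g$ there matters. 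Inverting the Gaussian tail expansion $1-\Phi(x)=x^{-1}\varphi(x)(1+O(x^{-2}))$ gives $\Phi^{-1}(1-v)^2=2\log(1/v)-\log\log(1/v)+O(1)$, whence
\[
g(1-v)=v^{\,\beta}\,L(1/v),\qquad \beta:=c^{-2}-1=\frac{1-2\rho}{\rho}>-1,
\]
where $L$ is slowly varying (explicitly, $L(x)$ is asymptotically a constant times $(\log x)^{\beta/2}$); in particular $g$ is regularly varying at $t=1$ with index $\beta$ and is integrable on $(0,1)$. Since $\log t\sim-(1-t)$, a Watson's-lemma/Tauberian argument (in the form valid for regularly varying integrands) then yields $\int_0^1 t^n g(t)\,dt\sim\Gamma(\beta+1)\,n^{-(\beta+1)}L(n)$ and $\int_0^1 t^n(1-t)g(t)\,dt\sim\Gamma(\beta+2)\,n^{-(\beta+2)}L(n)$; the common factor $L(n)$ cancels in the ratio, leaving
\[
1-\frac{P_{n+1}}{P_n}\sim\frac{\Gamma(\beta+2)}{\Gamma(\beta+1)}\cdot\frac1n=\frac{\beta+1}{n}=\frac{1}{c^2 n}=\frac{1-\rho}{n\rho},
\]
which is \eqref{eq:supp1}. (As a sanity check, $\rho=\tfrac12$ gives $g\equiv 1$, $P_k=1/(k+1)$ exactly, and $1-P_{n+1}/P_n=1/(n+2)$.)

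The main obstacle is the last step. Applying Laplace's method to \eqref{eq:integral} directly is exactly the route the preceding discussion warns against, and here the difficulty is concrete: the weight $g$ diverges at the endpoint $t=1$ when $\rho>\tfrac12$ (and vanishes there when $\rho<\tfrac12$), so it is not a bounded perturbation of a power, and one must invoke the regularly-varying version of Watson's lemma, which in turn forces the two-term endpoint expansion of the quantile function $\Phi^{-1}$ used above. A further layer of care is needed to obtain \eqref{eq:supp1} in the stated generality $n\rho\to\infty$, i.e.\ allowing $\rho\to 0$ (hence $\beta\to\infty$): then all error terms must be tracked uniformly in $\rho$. This can be arranged by sandwiching $g$ between constant multiples of Beta weights $(1-t)^{\beta_{\pm}}$ on a neighbourhood of $t=1$ (using that $L$ is squeezed between $(1-t)^{\pm\delta}$ for arbitrarily small $\delta$) and invoking the exact identity $1-B(n+2,\beta+1)/B(n+1,\beta+1)=(\beta+1)/(n+\beta+2)$, which equals $\tfrac{1-\rho}{n\rho}(1+o(1))$ precisely when $n\rho\to\infty$; this is also what singles out $n\rho\to\infty$ as the natural hypothesis.
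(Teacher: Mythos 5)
Your proposal is correct and follows essentially the same route as the paper's proof: the one-factor representation $\zeta_i=\sqrt{1-\rho}\,Z_i+\sqrt{\rho}\,Z$, the reduction of the orthant probability to $\E[\Phi(aZ)^k]$, the endpoint analysis via Mills' ratio and the normal quantile expansion, and the resulting Gamma-integral limit are all the same ingredients the paper uses. Your two refinements—writing the target as $1-P_{n+1}/P_n=\E[T^n(1-T)]/\E[T^n]$ so the slowly varying logarithmic factor cancels exactly rather than via $\log(n+1)/\log(n)\to 1$, and the Beta-weight sandwich to keep the error terms uniform as $\rho\to 0$ under the hypothesis $n\rho\to\infty$—are welcome tightenings of the same argument rather than a different approach.
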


The proof of Theorem \ref{eq:supp_theorem} relies mainly on a new asymptotic formula for the orthant probability of equicorrelated Gaussian random variables. To the best of our knowledge, this is the first of its kind, and it may be of independent interest. We state this result below in the following Lemma.

\begin{lemma}
Let $\rho \in [0,1)$ and $(X_1,...,X_n) \sim \mathcal{N}(\mathbf{0}, \boldsymbol{\Sigma})$ with $\boldsymbol{\Sigma}_{ij} = \rho$ for $i\neq j$ and $ \Sigma_{ii} = 1 $ for all $ i $. Then as $n\rho\rightarrow\infty$,
\begin{align*}
    &\P(X_1\geq 0, X_2\geq 0,\dots,X_n\geq 0) \sim 
    %\\ & \qquad % for aistats version
     \sqrt{\frac{1-\rho}{\rho}} \Gamma\left(\frac{1-\rho}{\rho}\right)(4\pi\log(n))^{\frac{1}{2}(\frac{1-\rho}{\rho}-1)}n^{-\frac{1-\rho}{\rho}}.
\end{align*}
\label{thm:orthant-expression}
\end{lemma}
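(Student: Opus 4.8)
The plan is to exploit the standard one-factor representation of an equicorrelated Gaussian vector. Write $X_i = \sqrt{\rho}\, Z + \sqrt{1-\rho}\, \xi_i$, where $Z, \xi_1,\dots,\xi_n$ are i.i.d.\ $\mathcal{N}(0,1)$. Conditioning on $Z = z$, the events $\{X_i \geq 0\}$ become independent, each with probability $\Phi\big(\sqrt{\rho/(1-\rho)}\, z\big)$. Hence
\begin{equation*}
\P(X_1 \geq 0,\dots,X_n \geq 0) = \int_{-\infty}^{\infty} \Phi\!\left(\sqrt{\tfrac{\rho}{1-\rho}}\, z\right)^{n} \varphi(z)\, \mathrm{d}z,
\end{equation*}
where $\varphi$ is the standard normal density. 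Substituting $t = \sqrt{\rho/(1-\rho)}\, z$ and writing $\lambda = (1-\rho)/\rho$, this is (up to the constant $\sqrt{\lambda}/\sqrt{2\pi}$) the integral $\int \Phi(t)^n e^{-\lambda t^2/2}\,\mathrm{d}t$. The point is that $\Phi(t)^n$ is overwhelmingly concentrated where $1-\Phi(t)$ is of order $1/n$, i.e.\ around $t \approx \sqrt{2\log n}$, which for the relevant regime $n\rho\to\infty$ (equivalently $\lambda = o(n/\log n)$ in a suitable sense) is where the Gaussian weight $e^{-\lambda t^2/2}$ must also be evaluated.

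The key steps, in order: (1) make the substitution above to reduce to $I_n := \int_{-\infty}^\infty \Phi(t)^n e^{-\lambda t^2/2}\,\mathrm{d}t$; (2) change variables again via $u = n(1-\Phi(t))$, so that $\Phi(t)^n = (1 - u/n)^n \to e^{-u}$; under this change $t = \Phi^{-1}(1 - u/n)$ and $\mathrm{d}t = \mathrm{d}u/(n\varphi(t))$; (3) use the classical tail asymptotics of the Gaussian quantile, $\Phi^{-1}(1-s) \sim \sqrt{2\log(1/s)}$ and more precisely $\Phi^{-1}(1 - u/n) = \sqrt{2\log n}\,\big(1 + o(1)\big)$ with the refined expansion capturing the $\log\log n$ correction, together with $\varphi(t) = \varphi\big(\Phi^{-1}(1-u/n)\big)$, which by the Mills-ratio relation satisfies $\varphi(t) \sim t\cdot u/n$; (4) substitute these asymptotics into $e^{-\lambda t^2/2}$: since $t^2 \sim 2\log n$, we get $e^{-\lambda t^2/2} \sim n^{-\lambda}$ times a slowly varying factor coming from the second-order term in $t^2$ — this is precisely what produces the $(4\pi \log n)^{(\lambda-1)/2}$ factor; (5) collect the remaining $u$-integral, which should reduce to $\int_0^\infty e^{-u} u^{\lambda - 1}\,\mathrm{d}u = \Gamma(\lambda)$, and reinstate the prefactor $\sqrt{\lambda}/\sqrt{2\pi}$ — after bookkeeping the powers of $2$, $\pi$ and $\log n$ this yields $\sqrt{\lambda}\,\Gamma(\lambda)(4\pi\log n)^{(\lambda-1)/2} n^{-\lambda}$, matching the claimed expression with $\lambda = (1-\rho)/\rho$.

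The main obstacle is Step (4)–(5): making the Laplace-type analysis of the $u$-integral rigorous and tracking \emph{all} the slowly-varying factors to the precision needed to pin down the constant and the exact power of $\log n$. One must expand $\Phi^{-1}(1 - u/n)^2$ to enough orders — schematically $\Phi^{-1}(1-s)^2 = 2\log(1/s) - \log\log(1/s) - \log(4\pi) + o(1)$ — and check that the $u$-dependence entering through $\log(1/s) = \log(n/u) = \log n - \log u$ contributes exactly the factor $u^{-\lambda}$ that, combined with the Jacobian factor $u$ and the leftover $u^{-1}$ from $\varphi(t)^{-1}$, assembles into $u^{\lambda-1}e^{-u}$. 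A secondary technical point is justifying that the contributions from $t$ bounded (where $\Phi(t)^n$ is superpolynomially small) and from the far tail are negligible, and uniformly controlling the error terms in the quantile expansion over the effective range $u = O(\mathrm{polylog}\,n)$; dominated convergence after the $u$-substitution, with an explicit integrable envelope, should handle this. I would also verify the edge behavior as $\rho \to 0$ (so $\lambda \to \infty$) is consistent — there the formula should degrade gracefully toward the independent-case value $2^{-n}$ — as a sanity check, though the stated regime $n\rho \to \infty$ is what the theorem requires.
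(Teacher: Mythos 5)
Your proposal follows essentially the same route as the paper's proof: the one-factor representation $X_i=\sqrt{\rho}Z+\sqrt{1-\rho}\xi_i$ reducing the orthant probability to $\E[\Phi(aZ)^n]$, the change of variables $u=n(1-\Phi(\cdot))$, the Mills-ratio and quantile-function asymptotics for $\varphi(\Phi^{-1}(1-u/n))$, and dominated convergence to extract $\Gamma\big(\tfrac{1-\rho}{\rho}\big)$. The steps you flag as the main technical burden (tracking the slowly varying $\log$ factors and justifying the limit of the $u$-integral) are exactly the ones the paper handles, so the outline is correct and matches the paper's argument.
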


Theorem \ref{eq:supp_theorem} then follows by carefully evaluating the ratio of the above expression at $n+1$ and $n$.

Before stating our next result, we provide a formal definition of a critical value $\varepsilon^*$ which we will reference~therein.
\begin{definition}
We say that $ \varepsilon^* $ is a critical value if, for each $ c > 0 $, $ R_n(\varepsilon^*-c) = 0 $ and $ R_n(\varepsilon^*+c) \rightarrow 1 $ as $ n \rightarrow \infty $.
\end{definition}

%Our next result provides an expression for $R_n(\varepsilon)$ and as a consequence provides a connection between the critical value $\varepsilon^*$ and the expected fraction $\delta$ of positively correlated test points.

Our next result provides a connection between the critical value $\varepsilon^*$, the number of training samples, and the correlation $ \rho $.

\begin{theorem} \label{eq:main_theorem}
Suppose the testing and training data satisfies the correlation structure \eqref{eq:cor}.
%Suppose that $ \P(\lim_{d\rightarrow\infty}y_i y_j \xb_i^{\top}\xb_j = \rho) = 1 $.
%Let $ \delta > 1/2 $ be the expected fraction of test points that have positive correlation $ \rho $ with the training points, conditional on the training data. 
Let $ U $ be a gamma random variable with shape and scale parameters $ (1-\rho)/\rho $ and $ 1 $, respectively, i.e.,
%Define the random variable 
%\begin{align}
%U \stackrel{d}{=} 1-\delta+\frac{2\delta-1}{2n}V, \quad %\text{where}\; V \sim  \chi^2\Big(\frac{2(1-\rho)}{\rho}\Big).
$ U \sim  \text{\emph{Gamma}}(\frac{1-\rho}{\rho}, 1) $.
%\end{align}
Then, as $ n\rho \rightarrow \infty $,
\begin{equation} \label{eq:main1}
R_n(\varepsilon) \sim \P(U\leq n\varepsilon).
\end{equation}
In particular, as $ n\rho \rightarrow \infty $,
\begin{equation} \label{eq:main2}
%\varepsilon^* = 1- \delta + \Big(\frac{1-\rho}{n\rho}\Big)(2\delta-1)
\varepsilon^* = \frac{1-\rho}{n\rho}
\end{equation}
is a critical value.
\end{theorem}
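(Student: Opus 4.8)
\emph{Proof proposal.} Since $\P=\mathcal N(\mathbf 0,\mathbf I)$ on $\wb$, the variables $\zeta_i=y_i\wb^\top\phi(\xb_i)$ form a centered, unit-variance Gaussian family with all pairwise correlations equal to $\rho$ by \eqref{eq:cor}, so they admit the one-factor representation $\zeta_i=\sqrt\rho\,Z+\sqrt{1-\rho}\,\xi_i$ with $Z,\xi_1,\xi_2,\dots$ i.i.d.\ standard normal. Under this representation $\VS(S_n)=\{\zeta_1\ge 0,\dots,\zeta_n\ge 0\}$, while, extending \eqref{eq:cor} to infinitely many equicorrelated test points, a fresh test coordinate is $\mathcal N(\sqrt\rho\,Z,\,1-\rho)$ conditionally on $Z$, so the population test error, as a function of $\wb$, equals $\Er=\Phi(-\lambda Z)$ with $\lambda:=\sqrt{\rho/(1-\rho)}$. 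The plan is to turn $R_n(\varepsilon)$ into a ratio of one-dimensional Gaussian integrals, apply a Laplace-type change of variables that uncovers a Gamma density, and then read off both \eqref{eq:main1} and \eqref{eq:main2}.

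First I would condition on $Z$. Because $\P(\VS(S_n)\mid Z=z)=\Phi(\lambda z)^n$ and $\{\Er\le\varepsilon\}=\{Z\ge t(\varepsilon)\}$ with $t(\varepsilon):=-\lambda^{-1}\Phi^{-1}(\varepsilon)$, we obtain
\[
R_n(\varepsilon)=\frac{\int_{t(\varepsilon)}^{\infty}\Phi(\lambda z)^n\varphi(z)\,\mathrm dz}{\int_{-\infty}^{\infty}\Phi(\lambda z)^n\varphi(z)\,\mathrm dz},
\]
whose denominator is exactly the equicorrelated orthant probability $\P(\VS(S_n))$ controlled by Lemma~\ref{thm:orthant-expression}. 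Next I would substitute $u=n\Phi(-\lambda z)$, so that $z=-\lambda^{-1}\Phi^{-1}(u/n)$, $\Phi(\lambda z)^n=(1-u/n)^n$, and the cutoff $z\ge t(\varepsilon)$ becomes $u\le n\varepsilon$. Using the Mills-ratio estimate $\Phi(-w)\sim\varphi(w)/w$ — legitimate because the integrand concentrates on $z\asymp\lambda^{-1}\sqrt{2\log n}$, where $\lambda z\to\infty$ — together with the induced quantile expansion $\tfrac12(\lambda z)^2=\log n-\log u-\tfrac12\log(4\pi\log n)+o(1)$, one finds, with $\beta:=(1-\rho)/\rho$,
\[
\Phi(\lambda z)^n\varphi(z)\,\Big|\frac{\mathrm dz}{\mathrm du}\Big|=(1-u/n)^n\,\sqrt\beta\,(4\pi\log n)^{(\beta-1)/2}\,n^{-\beta}\,u^{\beta-1}\,(1+o(1))
\]
uniformly for $u$ in compact sets. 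Integrating $u$ over $(0,n\varepsilon)$ for the numerator and over $(0,n)$ for the denominator, and using $(1-u/n)^n\to e^{-u}$ together with an integrable dominating bound, the common prefactor cancels and
\[
R_n(\varepsilon)\sim\frac{\int_0^{n\varepsilon}u^{\beta-1}e^{-u}\,\mathrm du}{\int_0^{\infty}u^{\beta-1}e^{-u}\,\mathrm du}=\frac{\gamma(\beta,n\varepsilon)}{\Gamma(\beta)}=\P(U\le n\varepsilon),\qquad U\sim\text{Gamma}\!\big(\tfrac{1-\rho}{\rho},1\big),
\]
which is \eqref{eq:main1}; letting $n\varepsilon\to\infty$ recovers the value $\sqrt\beta\,\Gamma(\beta)(4\pi\log n)^{(\beta-1)/2}n^{-\beta}$ of Lemma~\ref{thm:orthant-expression} for the denominator, a useful consistency check. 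Then \eqref{eq:main2} drops out of \eqref{eq:main1}: for any fixed $c>0$ and $n$ large, $\varepsilon^*-c=\tfrac{1-\rho}{n\rho}-c<0$, so $R_n(\varepsilon^*-c)=0$ because the population error is nonnegative; and $n(\varepsilon^*+c)=\tfrac{1-\rho}{\rho}+nc\to\infty$, so $R_n(\varepsilon^*+c)\sim\P\big(U\le\tfrac{1-\rho}{\rho}+nc\big)\to 1$, making $\varepsilon^*=\tfrac{1-\rho}{n\rho}$ a critical value.

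The hard part will be making the change of variables rigorous. I need an integrable dominating function for the transformed integrand that is valid not merely on fixed compact $u$-sets but throughout the transition zone where $u$ grows with $n$ (there the Mills-ratio and quantile expansions lose their leading-order form), and I must verify that the truncation at $u=n\varepsilon$ is harmless for every $\varepsilon$ — in particular for $\varepsilon$ bounded away from $0$, where $n\varepsilon\to\infty$ and one needs the missing tail $\int_{n\varepsilon}^{n}$ to be $o(1)$ of the whole integral. A second delicate point is tracking the $\rho$-dependence so that the estimates hold uniformly under $n\rho\to\infty$, where $\beta\to\infty$ if $\rho\to 0$. Since Lemma~\ref{thm:orthant-expression} is presumably established by precisely this substitution applied to the full integral, the genuinely new ingredient is the uniform control of the truncated version.
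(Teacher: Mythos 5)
Your proposal is correct and follows essentially the same route as the paper's proof: the one-factor representation $\zeta_i=\sqrt{\rho}\,Z+\sqrt{1-\rho}\,Z_i$, conditioning on $Z$ to write $R_n(\varepsilon)$ as a ratio of one-dimensional integrals of $\Phi(aZ)^n$, the substitution $u=n(1-\Phi(az))$ combined with the Mills-ratio and quantile asymptotics, and the same cancellation yielding the incomplete-Gamma ratio, followed by the identical two-sided argument for the critical value. The technical caveats you flag at the end (dominated convergence for the truncated integral, behavior in the transition zone) are exactly the points the paper handles via the dominated convergence step in the proof of Theorem~\ref{eq:supp_theorem}.
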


%Recall that by definition, $R_n(\epsilon)$ is the cumulative distribution function of $\Er$, conditioned on perfectly fitting the training data. Under the stated assumptions, Theorem \ref{eq:main_theorem} characterizes, asymptotically, this CDF as that of a shifted and scaled chi-square random variable. This result immediately implies that, in the large $n$ limit, the critical value $\varepsilon^*$ is simply the fraction $1-\delta$ of negative correlations, and that $R_n(\varepsilon) \rightarrow \mathbb{1}(\varepsilon \geq 1-\delta)$. In Figure \ref{fig:theorem-fig}, we plot this CDF for two different values of $\rho$ and $\delta$. 

In this simple setting, $ n $ and $ \rho $ completely determine the distribution $R_n(\varepsilon)$: if $\rho$ is close to 1, then the data points are nearly parallel, and we will have that the test errors sharply concentrate around the critical value $\varepsilon^*$, even for $n$ small. 
Of course, in practice, there will be a more subtle and complicated relationship between the correlations and the full distribution $R_n(\varepsilon)$, which will likely be difficult to characterize precisely. 
Nonetheless, we believe that it may be possible to prove concentration in the general case, without explicitly characterizing the full distribution $R_n(\varepsilon)$. 
This is captured by the following conjecture.

\begin{conjecture}
For any model class $\mathcal{F}_{\phi}$, datasets $S_n$, testing distribution $\Pr_{\xb,y}$ (each potentially satisfying some regularity conditions) and scaling $0<\alpha <1$, there exists a critical value $\varepsilon^*(\alpha)$ such that $\lim_{n, N\rightarrow \infty, n/N \rightarrow \alpha} R_n(\varepsilon) = \mathbb{1}(\varepsilon\geq \varepsilon^*(\alpha))$ almost surely.
\end{conjecture}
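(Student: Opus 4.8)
The assertion that $R_n(\varepsilon)\to\mathbb{1}(\varepsilon\geq\varepsilon^*(\alpha))$ almost surely is equivalent to saying that, under the conditional law $\P(\cdot\mid\VS(S_n))$, the random variable $\Er(\wb)$ converges weakly (a.s.\ over the data) to the point mass $\delta_{\varepsilon^*(\alpha)}$. I would split this into two sub-claims: (i) \emph{conditional concentration} — for the given data, $\Er(\wb)$ concentrates around its conditional mean $\varepsilon_n^* := \E[\Er(\wb)\mid\VS(S_n)]$, with $\Var(\Er(\wb)\mid\VS(S_n))\to 0$; and (ii) \emph{self-averaging} — $\varepsilon_n^*\to\varepsilon^*(\alpha)$ almost surely over the draw of $S_n$. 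Together these give the claim, and $\varepsilon^*(\alpha)$ is then the (deterministic) limit.

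For (i) I would use the Gaussian/spherical representation from Section~\ref{section:definitions}: conditioned on $\VS(S_n)$, $\wb$ is uniform on $\mathbb{S}^{N-1}\cap C_n$, where $C_n=\{\wb : y_i\wb^\top\phi(\xb_i)\geq 0,\ i\leq n\}$ is a convex cone. Two ingredients are needed. First, a regularity hypothesis on the test distribution making $\wb\mapsto\Er(\wb)$ Lipschitz on the sphere — e.g.\ that the law of $\phi(\xb)/\|\phi(\xb)\|$ has a bounded density, so $|\Er(\wb)-\Er(\wb')|\lesssim\|\wb-\wb'\|$. Second, a concentration bound for the truncated-to-a-cone uniform measure. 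A crude version comes from Lévy's lemma together with the change of measure $\mathrm{d}\P(\cdot\mid C_n)/\mathrm{d}\,\mathsf{Unif}=\mathbb{1}_{C_n}/\vol(C_n)$, giving $\P(|\Er(\wb)-m|\geq t\mid C_n)\leq e^{-cNt^2}/\vol(C_n)$; since $\alpha<1$ forces $\vol(C_n)\geq e^{-c'(\alpha)N}$ (Cover's function-counting / Wendel bound on the fraction of the sphere cut out by $n<N$ homogeneous halfspaces), this already yields concentration at any scale $t$ with $t^2>c'(\alpha)/c$. To push the scale down to $o(1)$ I would instead exploit that the conditional measure is log-concave (truncation of a log-concave law to a convex set is log-concave) and invoke a Poincaré/transportation inequality for it, or a Talagrand-style two-point argument, to bound $\Var(\Er(\wb)\mid\VS(S_n))$ directly by $o(1)$.

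For (ii) I would take ``$S_n$ drawn i.i.d.\ from a fixed, suitably regular distribution'' to be among the hypotheses, so that $\varepsilon_n^*$ is a function of i.i.d.\ samples. A bounded-differences estimate — adding one sample cuts $C_n$ by one further halfspace, and log-concavity should make the effect on $\varepsilon_n^*$ be $O(1/n)$ — gives $|\varepsilon_n^* - \E\varepsilon_n^*|\to 0$ a.s.\ via Azuma/McDiarmid. It then remains to show $\E\varepsilon_n^*$ converges and to identify the limit with $\varepsilon^*(\alpha)$. The natural route is statistical-mechanical: write $\varepsilon_n^*$ as the derivative at $\beta=0$ of the tilted free energy $\beta\mapsto\frac1N\log\E[e^{\beta N\Er(\wb)}\mathbb{1}_{C_n}]$, and control this via a Guerra--Toninelli interpolation; the value $\varepsilon^*(\alpha)$ is then the replica-symmetric generalization error of the Gibbs estimator.

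I expect step (ii) — and specifically the rigorous evaluation of $\lim_N\frac1N\log\P(\VS(S_n))$ and of its response function for \emph{general} $\phi$ and arbitrary, non-planted labels — to be the main obstacle: this is exactly the intractable Gaussian-orthant-probability computation flagged in the excerpt, and rigorous control is currently available only in special cases (e.g.\ spherical-perceptron capacity in the style of Shcherbina and Tirozzi, or teacher--student setups). A fully general proof would likely need either a universality/regularity hypothesis allowing one to replace the feature design by an equivalent Gaussian one, or — as the authors themselves suggest — an argument establishing the $o(1)$ concentration in (i) without ever identifying $\varepsilon^*(\alpha)$ explicitly.
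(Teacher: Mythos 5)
The first thing to say is that the paper does not prove this statement: it is stated explicitly as a conjecture, and the only rigorous evidence offered is Theorem~\ref{eq:main_theorem}, which handles the degenerate equicorrelated case \eqref{eq:cor} by reducing the orthant probability to a one-dimensional integral $\E_Z[\Phi^n(aZ)]$ and computing its asymptotics directly. Note that this solvable case is not really an instance of the conjectured $n/N\to\alpha$ scaling: the answer $\varepsilon^*=(1-\rho)/(n\rho)$ does not depend on $N$ and tends to $0$, rather than to a nontrivial $\varepsilon^*(\alpha)$. So there is no proof in the paper to compare yours against; what you have written is a research program for an open problem, and it should be judged as such.

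As a program, your decomposition into (i) conditional concentration and (ii) self-averaging of the conditional mean is the natural one, and (i) looks genuinely workable: since the constraints $y_i\wb^\top\phi(\xb_i)\geq 0$ are scale-invariant, the conditional law factorizes into an unchanged radial part times an angular part, so one can work with the standard Gaussian restricted to the convex cone $C_n$; this measure has a $1$-strongly-convex potential on a convex domain and hence satisfies a dimension-free Poincar\'e inequality by Bakry--\'Emery, giving $\Var(\Er(\wb)\mid\VS(S_n))=O(L^2/N)$ once $\Er$ is Lipschitz on the sphere. This is strictly better than your L\'evy-plus-Wendel bound, which, as you note, only reaches constant scales. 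One caveat: the right regularity hypothesis is not a bounded density for $\phi(\xb)/\|\phi(\xb)\|$ (which degenerates in high dimension) but uniform anti-concentration of the one-dimensional marginals, $\sup_{\|\ub\|=1}\Pr(|\ub^\top \phi(\xb)|\leq t\|\phi(\xb)\|)\lesssim t$. The genuine gap is where you say it is: step (ii). Proving that $\E[\Er(\wb)\mid\VS(S_n)]$ converges at all (let alone identifying the limit) requires controlling $\frac{1}{N}\log\P(\VS(S_n))$ and its response for general feature maps and real, non-planted labels; outside Gaussian or i.i.d.\ feature ensembles and teacher--student label models, no interpolation or superadditivity argument is currently available, and the conjecture as stated does not even assume $S_n$ is i.i.d., so your bounded-differences step silently strengthens the hypotheses. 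In short: a sensible and honest sketch of a strategy, correctly locating the obstruction, but not a proof — and the paper does not claim one either.
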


Theorem \ref{eq:main_theorem} provides such a result in the case when the data is equicorrelated. Previous work using the statistical mechanics framework also prove similar results under different simplifying assumptions, namely when the features $\xb_{ik} \stackrel{\text{i.i.d.}}{\sim} \text{Unif}(\{-1,1\})$, $k=1,\dots,d$, and the labels $y_i$ are generated via a teacher model $\wb_\star$ s.t. $y_i = \sign(\wb_\star^\top \xb_i)$ (see, e.g., Chapter 2 of \cite{Engel2001StatisticalLearning}). However, these results typically only focus on the $n>d$ case, which is less relevant to the modern machine learning~regime.

\section{Discussion and Conclusion}
%\section{Conclusions and Directions for Future Work}
\label{section:conclusions}

In this paper, we built on previous literature on the statistical mechanics of learning to develop a framework to study the \textit{typical} test error of a classifier, and we propose this as an alternative to the more standard uniform convergence approach. 
We formally define the full distribution of test errors among interpolating classifiers and introduce a method to compute this distribution accurately on real datasets. 
One of the most important findings of our investigation is that, given a particular training and testing setup, there exists a critical value $\varepsilon^*$ around which almost all interpolating classifiers' test errors eventually concentrate. 
This will not come as a surprise to the statistical physicist: such typical values commonly appear in physical systems. 
However, as we have demonstrated, this critical value can differ significantly from the error $\varepsilon_{\text{unif}}$, which one would obtain via a uniform convergence analysis, especially in the interpolating/over-parameterized regime, and which may be more familiar to the machine~learner. 

Our results should motivate further research into alternatives to the uniform convergence framework, either through the lens of statistical physics or some other (likely related) perspective, and they should ultimately help resolve questions surrounding the good performance of over-parameterized machine learning models. 
As a first step, we state a few potential directions for future work building off of the results presented here. 

\textbf{More general function classes.}
While encompassing many models of interest, the function classes $\mathcal{F}_\phi$ of course do not include general neural network architectures. In this paper, we studied random feature models, which can be interpreted as neural networks with internal weights fixed at a random initialization. Another interesting setting which may be more tractable to study would be that of linearized networks of the~form
\begin{align}
    f(\xb) = \sign(\wb^\top \nabla F(\xb; \wb_0))
\end{align}
where $F$ is an arbitrary neural network with random initialization $\wb_0$. A variety of results have shown that these models coincide with neural networks in the large-width limit via the neural tangent kernel \cite{JacotNTK2018, AroraNTK2019}. While our approach would, in theory, work out-of-the-box for these models, in practice, these involve a very large number of features (approximately $O(LN^2)$, where $L$ is the number of layers, and $N$ is the width of each layer). We found that even with the \textsc{lin-ess} algorithm, sampling from $\P(\cdot \mid \VS(S_n))$ was impractical for these models. However, developing other methods for computation in this setting could yield interesting insights into the advantages (and disadvantages) of various network architectures. 

\textbf{Beyond the interpolating regime.}
The motivation for our studying interpolating classifiers comprising the version space $\VS(S_n)$ was previous work in the statistical mechanics literature, as well as the well-known worst-case results for these models given by, e.g., Vapnik–Chervonenkis theory. However, this is not the only method one could use to study the distribution of test errors. A promising alternative would be to consider the distribution over weights $\wb$ induced by some optimization algorithm, such as stochastic gradient descent (SGD). Indeed, previous work has shown that under various assumptions, SGD produces a Gaussian stationary distribution over weights $\wb$ \cite{Mandt2017StochasticInference}.  Under other (probably more realistic) assumptions, it leads to heavy-tailed structure in the weights \cite{HodMah20A_TR,GSZ20_TR}.  An intriguing direction for future work would be to study the distribution over test errors $\Er(\wb)$ induced by such a stationary distribution. It is possible that this may even simplify the theoretical investigation: whereas we studied weights drawn from $\P(\cdot\mid \VS(S_n))$ (a rather complicated distribution), it may be easier to study weights drawn from a Gaussian (or some other tractable) distribution. 

\subsubsection*{Acknowledgments}
MM would like to acknowledge DARPA, NSF, and ONR for providing partial support of this work. JK would like to acknowledge funding from NSF DMS-1915932 and NSF HDR TRIPODS DATA-INSPIRE DCCF-1934924. We also thank the authors of \cite{Gessner2020IntegralsConstraints} for sharing their implementation of the \textsc{lin-ess} algorithm. 

\bibliographystyle{alpha}%apalike}%abbrvnat}% do they have to be author-year? yes, i think so 
\bibliography{references}

\appendix
\section{Technical Results}
Here, we provide proofs of our main results.
\subsection{Proofs of Theorems \ref{eq:supp_theorem} and \ref{eq:main_theorem}}
\begin{proof}[Proof of Theorem \ref{eq:supp_theorem}]
The assumption \eqref{eq:cor} on the correlation structure of the data implies that 
\begin{equation} \label{eq:zeta_rep}
\zeta_i \stackrel{d}{=} \sqrt{1-\rho}Z_i+\sqrt{\rho} Z, 
\end{equation}
for $ i = 1, 2, \dots, n+m $, where $ Z, Z_1, Z_2, \dots, Z_{n+m}$ are i.i.d. $ \mathcal{N}(0, 1) $. Let $ a = \sqrt{\frac{\rho}{1-\rho}} $. According to (11), $\P(\VS(S_n))$ can be expressed as
\begin{align}
\P(\zeta_1\geq 0, \zeta_2\geq 0,\dots,\zeta_n\geq 0) & = \E_{Z\sim N(0,1)}[(\Phi(aZ))^{n}] \nonumber \\
& = \int_{-\infty}^{\infty}(\Phi(az))^{n}\phi(z)dz, \label{eq:int_main}
\end{align}
where $ \phi(\cdot) $ is the density of a $\mathcal{N}(0, 1)$ distribution.
Make the change of variables $ u = n(1-\Phi(az)) $. Then the integral \eqref{eq:int_main} becomes
\begin{equation} \label{eq:int_part}
\frac{(2\pi)^{\frac{1}{2}(\frac{1}{a^2}-1)}}{na}\int_0^{n}(1-u/n)^{n}(\phi(\Phi^{-1}(1-u/n)))^{\frac{1}{a^2}-1}du,
\end{equation}
where $ \Phi^{-1}(\cdot)$ is the quantile function of $\mathcal{N}(0, 1)$.
Next, consider the so-called ``density quantile function'' $ \phi(\Phi^{-1}(v)) $. Using a standard asymptotic expression for Mills' ratio \cite{Feller1968AnApplications}, we have
\begin{equation} \label{eq:asymp1}
\frac{1-\Phi(x)}{\phi(x)} = \frac{1}{x}\Big(1+O\Big(\frac{1}{x^2}\Big)\Big), \quad x \rightarrow \infty.
\end{equation}
Furthermore, the quantile function $ \Phi^{-1}(v) $ has the following asymptotic expression \cite{Fung2018QuantileFunctions}
\begin{equation} \label{eq:asymp2}
\Phi^{-1}(v) = \sqrt{2\log(1/(1-v))}\bigg(1+O\bigg(\frac{\log\log(1/(1-v))}{\log(1/(1-v))}\bigg)\bigg),
\end{equation}
as $ v \uparrow 1 $.
Combining these two facts (\eqref{eq:asymp1} and \eqref{eq:asymp2}) yields
\begin{align*}
& \phi(\Phi^{-1}(v)) = (1-v)\sqrt{2\log(1/(1-v))} \times \\ & \qquad\qquad\qquad\quad \bigg(1+O\bigg(\frac{\log\log(1/(1-v))}{\log(1/(1-v))}\bigg)\bigg), \quad v \uparrow 1.
\end{align*}
Using this asymptotic expression for $ \phi(\Phi^{-1}(v)) $, we find that 
$$ \int_0^{n}(1-u/n)^{n}(\phi(\Phi^{-1}(1-u/n)))^{\frac{1}{a^2}-1}du $$
from \eqref{eq:int_part} is asymptotically
\begin{equation} \label{eq:int_pen}
\begin{aligned}
& n^{1-1/a^2}(2\log(n))^{\frac{1}{2}(\frac{1}{a^2}-1)}\times \\ & \qquad \int_0^{n}\Big(1+\frac{\log(1/u)}{\log (n)}\Big)^{\frac{1}{2}(\frac{1}{a^2}-1)}u^{1/a^2-1}e^{-u}du.
\end{aligned}
\end{equation}
Finally, by the dominated convergence theorem, $ \int_0^{n}(1+\frac{\log(1/u)}{\log (n)})^{\frac{1}{2}(\frac{1}{a^2}-1)}u^{1/a^2-1}e^{-u}du $ is asymptotically
$$
\int_0^{\infty}u^{1/a^2-1}e^{-u}du = \Gamma(1/a^2).
$$
Therefore, \eqref{eq:int_pen} is asymptotically 
\begin{equation} \label{eq:express}
n^{1-1/a^2}(2\log(n))^{\frac{1}{2}(\frac{1}{a^2}-1)}\Gamma(1/a^2).
\end{equation}
Combining \eqref{eq:int_part} and \eqref{eq:express}, we have
\begin{equation} \label{eq:asymp}
\begin{aligned}
& \P(\zeta_1\geq 0, \zeta_2\geq 0,\dots,\zeta_n\geq 0) \sim \\ & \qquad \frac{\Gamma(1/a^2)}{a}(4\pi\log(n))^{\frac{1}{2}(\frac{1}{a^2}-1)}n^{-1/a^2}.
\end{aligned}
\end{equation}
Having given an asymptotic expression for the orthant probabilities, we turn our attention to the ratio $ \P(y_{n+1} = f(\xb_{n+1}) \mid \mathsf{VS}(S_n)) $, which equals
\begin{equation} \label{eq:R1}
\frac{\P(\{y_{n+1} = f(\xb_{n+1})\}\cap \VS(S_n))}{\P(\VS(S_n))}.
\end{equation}
Next, we recognize that the set $ \{y_{n+1} = f(\xb_{n+1})\}\cap \VS(S_n) $ is another version space with $ n+1 $ sample points and the same correlation structure as before, per the assumption of the theorem.
Therefore, using the above asymptotic formula \eqref{eq:asymp} for $ \P(\zeta_1\geq 0, \zeta_2\geq 0,\dots,\zeta_n\geq 0) $, we have that \eqref{eq:R1} is asymptotically
\begin{align*}
\frac{\frac{\Gamma(1/a^2)}{a}(4\pi\log(n+1))^{\frac{1}{2}(\frac{1}{a^2}-1)}(n+1)^{-1/a^2}}{\frac{\Gamma(1/a^2)}{a}(4\pi\log(n))^{\frac{1}{2}(\frac{1}{a^2}-1)}n^{-1/a^2}} = 1-\frac{1/a^2}{n}+\frac{1/a^2-1}{2n\log(n)}+O(1/n^2),
\end{align*}
thus completing the proof.
\end{proof}

\begin{proof}[Proof of Theorem \ref{eq:main_theorem}]
To begin, note that by definition,
\begin{equation} \label{eq:ratio}
R_{n,m}(\varepsilon) = \frac{\P(\{\Er_m(f) \leq \varepsilon\} \cap \VS(S_n))}{\P( \VS(S_n))}.
\end{equation}
Using the representation \eqref{eq:zeta_rep} of $ \zeta_i $ in terms of $ Z $ and $ Z_i $, we have $ \Er_m(f) \stackrel{d}{=} \frac{1}{m}\sum_{i=1}^m \mathbb{1}(Z_i < -aZ) $ and $ \Er(f) \stackrel{d}{=} \lim_m \frac{1}{m}\sum_{i=1}^m \mathbb{1}(Z_i < -aZ) $, where $ a = \sqrt{\frac{\rho}{1-\rho}} $. Henceforth, we take these distributional equivalents as the definitions of $\Er_m(f) $ and $ \Er(f) $. Now,
\begin{align*}
& \P(\{\Er_m(f) \leq \varepsilon\} \cap \VS(S_n)) = \\ & \qquad \E_{Z\sim N(0,1)}[\P(\Er_m(f)\leq\varepsilon\mid Z)\Phi^n(aZ)].
\end{align*}
%where $ \P(\Er_m(f)\leq\varepsilon\mid Z) = \P(\frac{1}{m}\sum_{i=1}^m \mathbb{1}(Z_i \leq aB_iZ)\leq \varepsilon \mid Z) $ and $ a = \sqrt{\frac{\rho}{1-\rho}} $.
By the strong law of large numbers, given $ Z $, $ \frac{1}{m}\sum_{i=1}^m \mathbb{1}(Z_i < -aZ) $ converges almost surely (with respect to the test data $ S_{\text{test}}$ and $Z_1, Z_2, \dots $) to its mean $ \Phi(-aZ) = 1-\Phi(aZ) $. Thus, by the dominated convergence theorem, almost surely,
$
\lim_m\P(\Er_m(f)\leq \varepsilon \mid Z) = \mathbb{1}(1-\Phi(aZ) \leq \varepsilon).
$
Therefore, it follows that, almost surely,
$
\P(\Er(f)\leq\varepsilon\mid Z) = \mathbb{1}(1-\Phi(aZ) \leq \varepsilon).
$
Next,
\begin{align}
R_n(\varepsilon) & = \frac{\P(\{\Er(f) \leq \varepsilon\} \cap \VS(S_n))}{\P( \VS(S_n))} \nonumber \\
& = \frac{\E_{Z\sim N(0,1)}[\mathbb{1}( 1-\Phi(aZ) \leq \varepsilon)\Phi^n(aZ)]}{\E_{Z\sim N(0,1)}[\Phi^n(aZ)]} \nonumber \\ & = 
\frac{\int_{0}^{\varepsilon n}(1-u/n)^{n}(\phi(\Phi^{-1}(1-u/n)))^{\frac{1}{a^2}-1}du}{\int_0^{n}(1-u/n)^{n}(\phi(\Phi^{-1}(1-u/n)))^{\frac{1}{a^2}-1}du}, \label{eq:int_ratio}
\end{align}
where for the final equality, we use \eqref{eq:int_part} from the proof of Theorem \ref{eq:supp_theorem}.
Using the same techniques as Theorem \ref{eq:supp_theorem} to derive asymptotic integral expressions therein (in fact, the integrands of the integrals are identical), \eqref{eq:int_ratio} is asymptotically equivalent to
$$
\frac{\int_{0}^{\varepsilon n}u^{1/a^2-1}e^{-u}du}{\int_{0}^{\infty}u^{1/a^2-1}e^{-u}du} = \P(U \leq n\varepsilon),
$$
which proves the first claim \eqref{eq:main1}. 

To prove the second claim \eqref{eq:main2} about the critical value, let $ c > 0 $ be arbitrary. Then, 
$$
\P(U \leq n(\epsilon^*+c)) \rightarrow 1,
$$
provided $ n\rho \rightarrow \infty $.
On the other hand, for $ n\rho $ large enough, $ (1-\rho)/\rho - n c < 0 $ and hence,
$$
\P(U \leq n(\epsilon^*-c)) = 0. \qedhere
$$
\end{proof}

\subsection{A new asymptotic formula for the orthant probability of equicorrelated Gaussians}
A consequence of the proof of Theorem \ref{eq:supp_theorem} is the following asymptotic expression for the orthant probability of an equicorrelated Gaussian which, to the best of our knowledge, is new. This is referred to as Lemma \ref{thm:orthant-expression} in the main text.

\begin{corollary}[Asymptotic expression for orthant probability in equicorrelated case]
Let $\rho \in [0,1)$ and $(X_1,...,X_n) \sim \mathcal{N}(\mathbf{0}, \boldsymbol{\Sigma})$ with $\boldsymbol{\Sigma}_{ij} = \rho$ for $i\neq j$ and $ \Sigma_{ii} = 1 $ for all $ i $. Then as $n\rho\rightarrow\infty$,
\begin{align*}
    \P(X_1\geq 0, X_2\geq 0,\dots,X_n\geq 0) \sim \sqrt{\frac{1-\rho}{\rho}} \Gamma\left(\frac{1-\rho}{\rho}\right)(4\pi\log(n))^{\frac{1}{2}(\frac{1-\rho}{\rho}-1)}n^{-\frac{1-\rho}{\rho}}.
\end{align*}
\end{corollary}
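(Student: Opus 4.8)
The plan is to reduce the orthant probability to the one-dimensional integral already obtained in the proof of Theorem~\ref{eq:supp_theorem}, and then simply match parameters. The key observation is that the equicorrelated Gaussian vector $(X_1,\dots,X_n)$ with $\boldsymbol{\Sigma}_{ij}=\rho$ ($i\neq j$), $\Sigma_{ii}=1$ admits exactly the representation \eqref{eq:zeta_rep}: writing $X_i \stackrel{d}{=} \sqrt{1-\rho}\,Z_i + \sqrt{\rho}\,Z$ with $Z, Z_1,\dots,Z_n$ i.i.d.\ $\mathcal{N}(0,1)$, one checks $\Var(X_i)=1$ and $\Cov(X_i,X_j)=\rho$. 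Hence the orthant probability equals $\E_{Z}[(\Phi(aZ))^n]$ with $a=\sqrt{\rho/(1-\rho)}$, which is precisely \eqref{eq:int_main}. This is valid for every $\rho\in[0,1)$ since then $\boldsymbol{\Sigma}$ is positive semidefinite and the representation makes sense.

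Next I would invoke verbatim the asymptotic analysis carried out in the proof of Theorem~\ref{eq:supp_theorem}: the change of variables $u=n(1-\Phi(az))$ turns the integral into \eqref{eq:int_part}, and the Mills-ratio and quantile-function expansions \eqref{eq:asymp1}--\eqref{eq:asymp2}, together with dominated convergence, give that $\E_Z[(\Phi(aZ))^n]$ is asymptotically
\begin{equation*}
\frac{\Gamma(1/a^2)}{a}(4\pi\log n)^{\frac{1}{2}(\frac{1}{a^2}-1)}n^{-1/a^2}
\end{equation*}
as $n\rho\to\infty$ (the condition $n\rho\to\infty$ is exactly what is needed for the dominated-convergence step, since $1/a^2=(1-\rho)/\rho$ and the tail of the integrand must be controlled uniformly). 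Substituting $1/a^2=(1-\rho)/\rho$ and $1/a=\sqrt{(1-\rho)/\rho}$ yields the claimed expression.

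So the corollary is essentially a restatement of an intermediate step (namely \eqref{eq:asymp}) of the proof of Theorem~\ref{eq:supp_theorem}, specialized to the pure equicorrelated case and phrased in terms of $\rho$ rather than $a$. The only genuine content is the identification of the equicorrelated covariance with the mixture representation \eqref{eq:zeta_rep}, and checking that the expansion goes through for all $\rho\in[0,1)$ and not merely for the correlation values arising from data. I do not anticipate a real obstacle; the one point deserving care is ensuring the $O(\cdot)$ error terms from \eqref{eq:asymp1}--\eqref{eq:asymp2} remain negligible after being raised to the power $\frac{1}{a^2}-1$ and integrated against $u^{1/a^2-1}e^{-u}$, but this is handled exactly as in Theorem~\ref{eq:supp_theorem} via dominated convergence, since $(1+\log(1/u)/\log n)^{\frac{1}{2}(1/a^2-1)}\to 1$ pointwise and is dominated by an integrable envelope.
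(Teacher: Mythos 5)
Your proposal is correct and is essentially identical to the paper's treatment: the paper obtains this corollary precisely as a restatement of the intermediate asymptotic \eqref{eq:asymp} from the proof of Theorem~\ref{eq:supp_theorem}, after observing that the equicorrelated vector admits the representation \eqref{eq:zeta_rep} and substituting $1/a^2=(1-\rho)/\rho$. Your added care about the dominated-convergence step and the validity of the mixture representation for all $\rho\in[0,1)$ matches what the paper implicitly relies on.
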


\subsection{Lower bound on test error for Gaussian model}
Recall the Gaussian data model:
\begin{align}
(\xb, y) \sim \frac{1}{2}(N_+, 1) + \frac{1}{2}(N_-, -1)
\label{gaussian-data-model}
\end{align}
where $N_+ \sim \mathcal{N}(\mu, \Sigma) $, $ N_- \sim \mathcal{N}(-\mu, \Sigma)$ and $\mu \in \R^d $, $ \Sigma \in \mathcal{S}^{d}_+$. For this model, we have the lower bound 
\begin{align}
    \Er(\wb)\geq \varepsilon^* \geq \Phi(-\sqrt{\mu^\top\Sigma^{-1}\mu})
\end{align}
To see this, define the norm $\|\xb\|_\Sigma^2 = \xb^\top \Sigma \xb$. The value $\varepsilon^*$ satisfies
\begin{align*}
-\Phi^{-1}(\varepsilon^*) &= \text{ess} \sup_{\wb\in \VS}\frac{\wb^\top \mu}{\sqrt{\wb^\top \Sigma \wb}} \leq  \sup_{\wb \in \VS} \frac{\wb^\top \mu}{\sqrt{\wb^\top \Sigma \wb}} = \sup_{\wb\in \VS,\, \|\wb\|_\Sigma = 1} \wb^\top \mu\\ 
&\leq \sup_{\|\wb\|_{\Sigma} = 1} \wb^\top \mu 
=\|\mu\|_{\Sigma^{-1}} =  \sqrt{\mu^\top \Sigma^{-1}\mu}   
\end{align*}
where we use the fact that $\|\cdot\|_{\Sigma^{-1}}$ is the dual norm to $\|\cdot\|_{\Sigma}$. Hence solving for $\varepsilon^*$, we get the lower~bound 
$$
\varepsilon^* \geq \Phi(-\sqrt{\mu^\top \Sigma^{-1}\mu})  .
$$
When $\Sigma = \sigma^2 I$, this lower bound reduces to the usual signal-to-noise ratio $\Phi(-\frac{\|\mu\|}{\sigma})$.

\section{Review of LIN-ESS Algorithm and Additional Empirical Results}

\subsection{Review of LIN-ESS}
In this section, we briefly review the \textsc{lin-ess} algorithm introduced in \cite{Gessner2020IntegralsConstraints}, which is the main computational tool we use in our empirical evaluation. \textsc{lin-ess} builds on the Elliptical Slice Sampling algorithm \cite{Murray2010EllipticalSampling}, which can be used to sample from a posterior under a $\mathcal{N}(\mu, \Sigma)$ prior and generic likelihood $L$. The generic algorithm works as follows: given a starting point $\xb_0$ and a new sample $\xb' \sim \mathcal{N}(\mu,\Sigma)$, construct an ellipse passing through these two points:
$$
\xb(\theta) = \xb_0 \cos(\theta) + \xb'\sin(\theta).
$$
We then sample an angle $\hat{\theta}$ randomly and accept $\xb(\hat{\theta})$ if the likelihood at this point $L(\xb(\hat{\theta}))$ is sufficiently large. Otherwise, we sample a new angle in a narrower band of feasible values. While a general and provably valid\footnote{Meaning, it can be shown to have the true posterior as a unique stationary distribution.} algorithm, this procedure can be slow, as many samples may be rejected before finding an acceptable~sample. 

The key insight of \cite{Gessner2020IntegralsConstraints} is that when the likelihood has the form $L(\xb) = \prod_{i=1}^n \mathbb{1}(\mathbf{a}_i^\top \xb  + b_i \geq 0)$, the region of feasible angles $\theta$ can be obtained analytically, avoiding the need to reject infeasible $\hat{\theta}$. This results in significantly faster computation, even in high dimensions.

\subsection{Additional empirical results}

In Figures~\ref{fig:example_APP_FIRST_EX}-\ref{fig:example_APP_FOURTH_EX} we provide additional empirical results, complementing the results presented in the main~text.

\begin{figure}
    \centering
    {{\includegraphics[scale=.5]{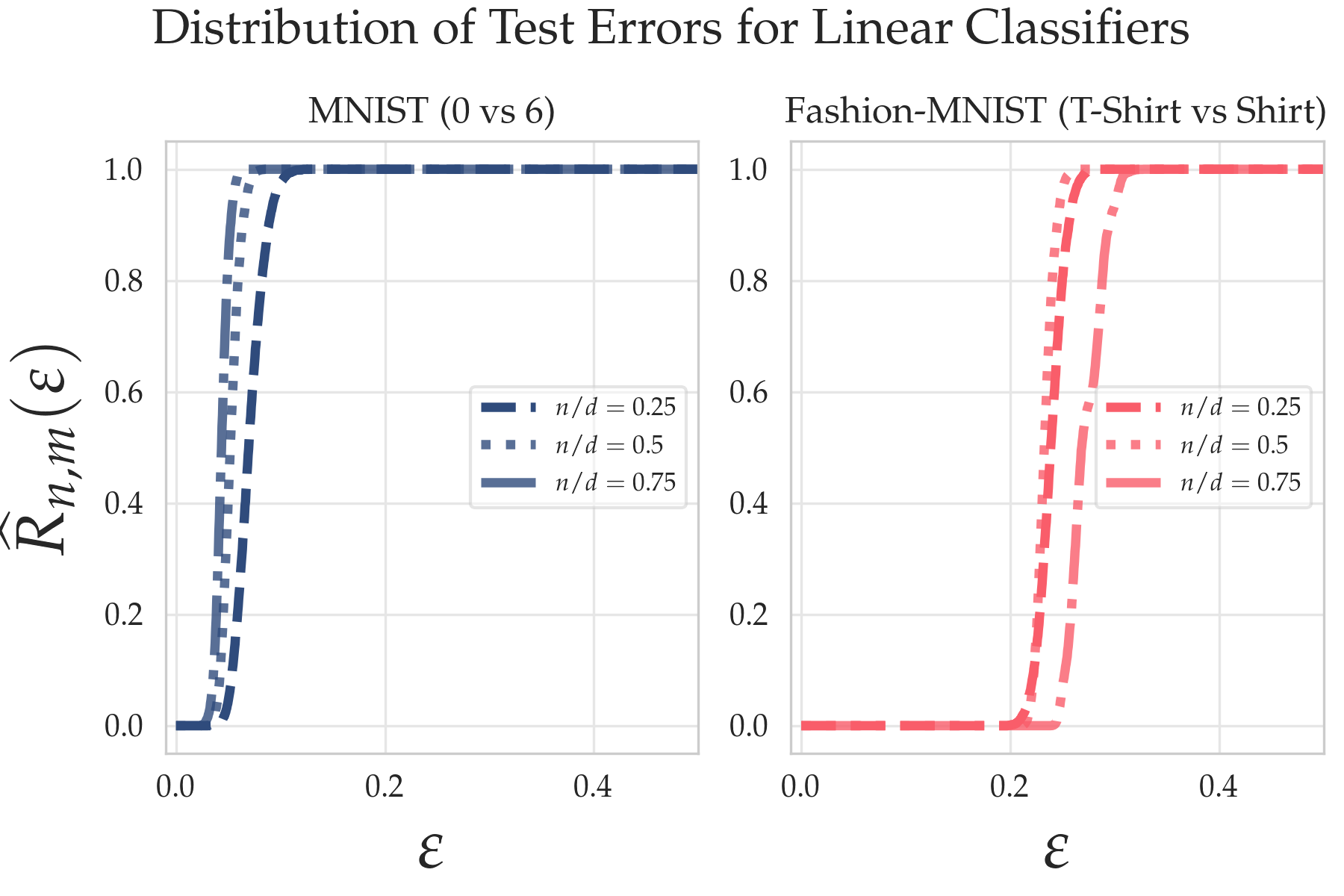} }}%
    \qquad
    {{\includegraphics[scale=.5]{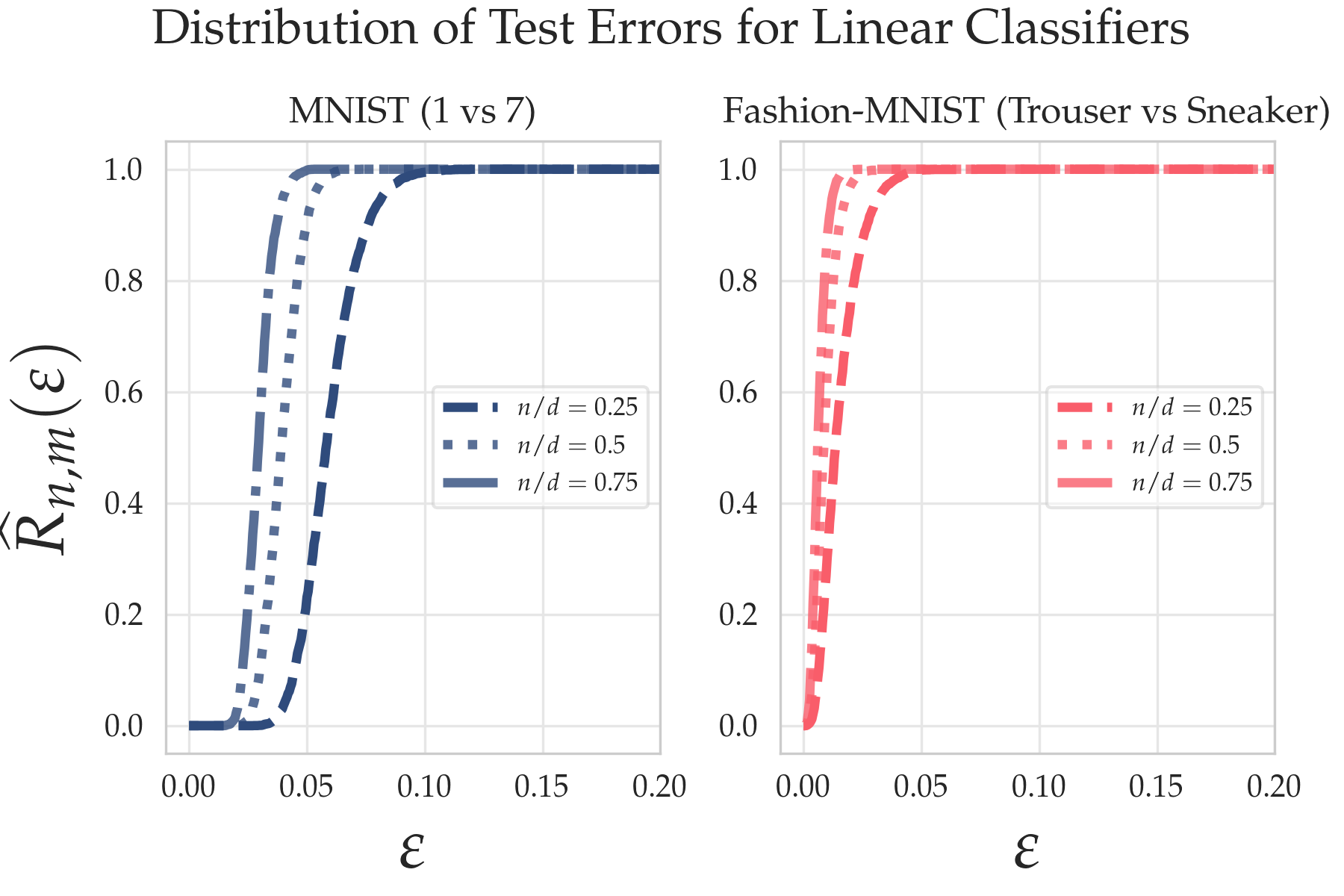} }}%
    \caption{Test error distributions for linear classifiers, similar to Figure 2 in the main text. Here we try four different binary problems; 0 vs 1 classification for \textsc{mnist}, T-shirt vs Shirt classification for \textsc{fashion-mnist}, 1 vs 7 classification for \textsc{mnist} and Trouser vs Sneaker classification for \textsc{fashion-mnist}. While we observe similar qualitative behavior as in the main text, we do note substantial differences in the location of the distributions based on the difficulty of the task. For example, note that T-shirt vs Shirt classification is significantly more difficult than 0 vs 6 classification, while Trouser vs Sneaker classification is easier that 0 vs 1 classification.}%
    \label{fig:example_APP_FIRST_EX}%
\end{figure}

\begin{figure}
    \centering
    {{\includegraphics[scale=.3]{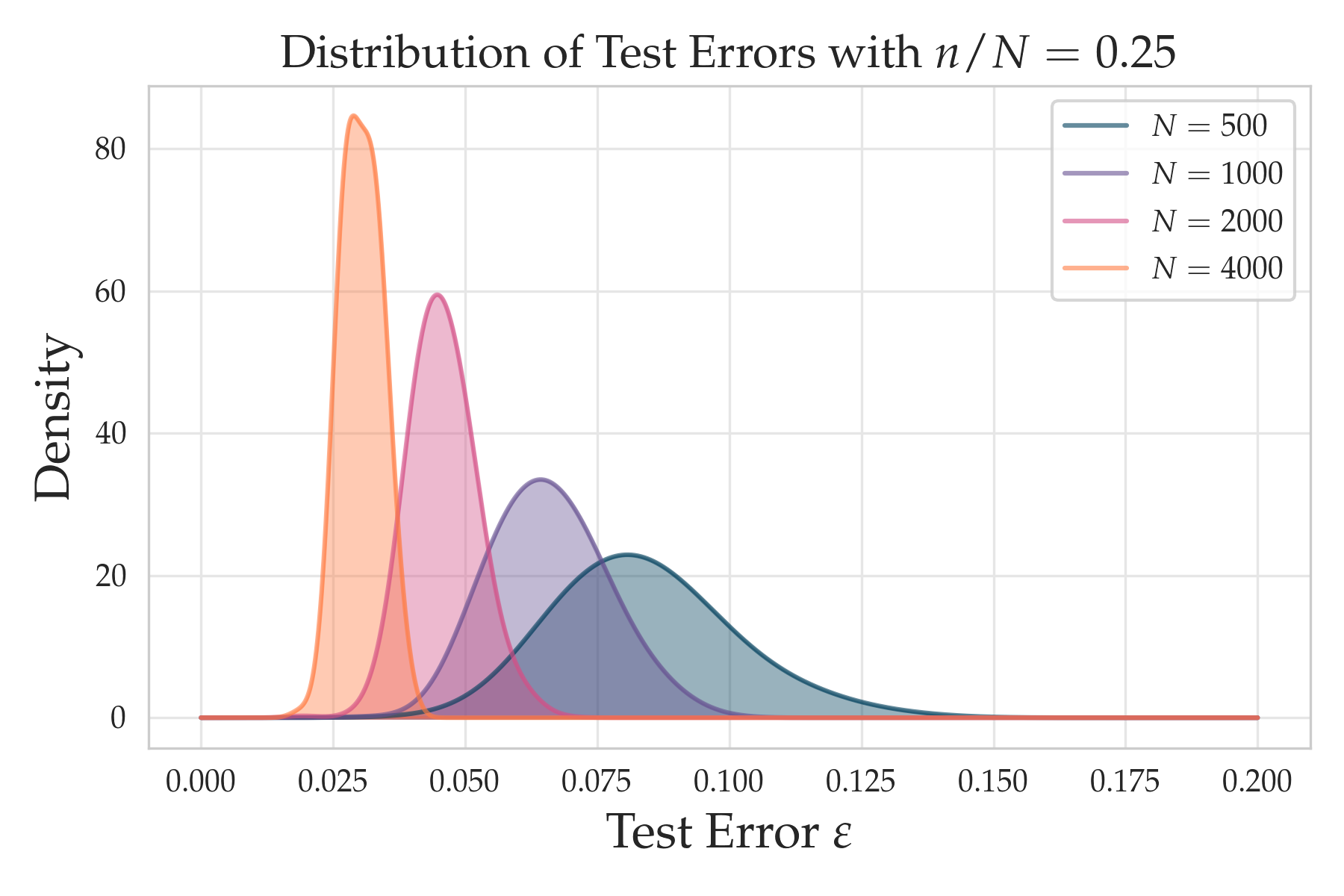} }}%
    \qquad
    {{\includegraphics[scale=.3]{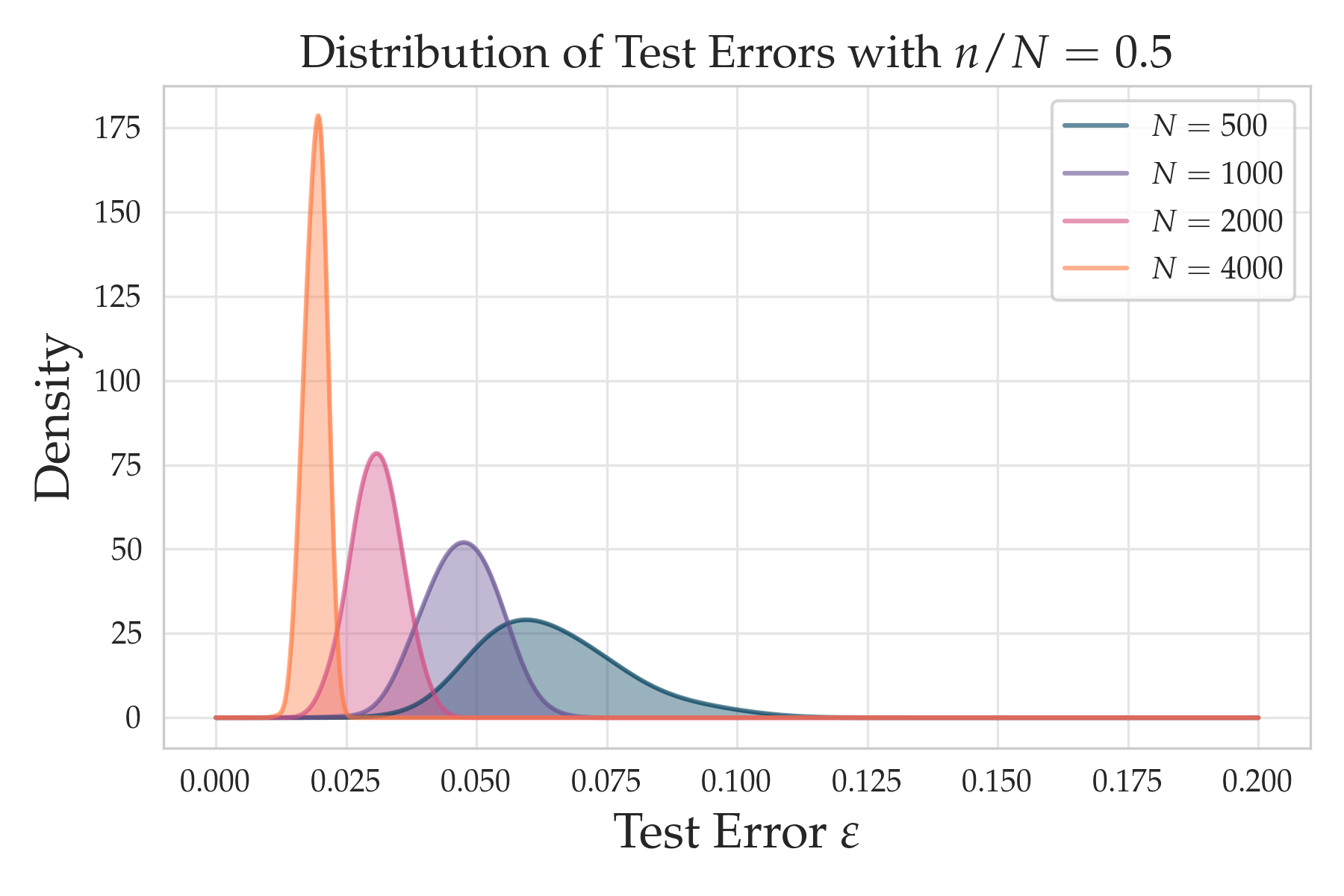}}}%
    \qquad
    {{\includegraphics[scale=.3]{fig/rrf_mnist_pdf_plot_alpha=0.5.png}}}%
    \caption{Test error distributions for random ReLU feature classifiers on \textsc{mnist}. Here we plot PDFs, similar to Figure \ref{fig:pdf_figure}, which we fit using a Gaussian kernel density estimator. These plots are not as precise as the CDF plots shown in the main text, but are more usually for seeing visually the concentration phenomenon as $N\to\infty$.}%
    \label{fig:example_APP_THIRD_EX}%
\end{figure}

\begin{figure}
    \centering
    \includegraphics[scale=.5]{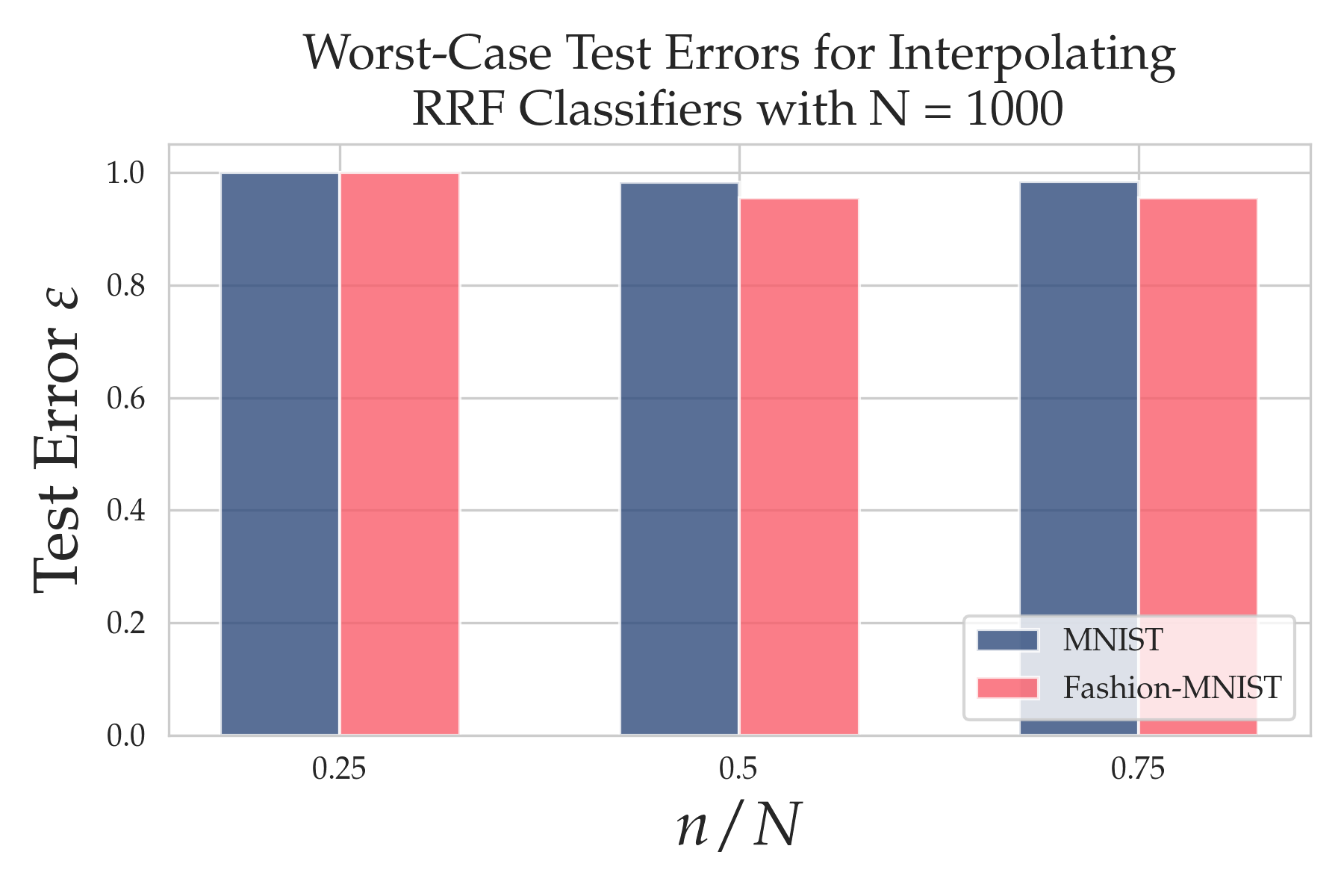}
    \caption{Example of worst case test errors for random ReLU feature models, for $N=1000$ random features. We observe again that bad classifiers do indeed exist, despite an abundance of classifiers with low test error.}%
    \label{fig:example_APP_SECOND_EX}%
\end{figure}

\begin{figure}
    \centering
    \includegraphics[scale=.5]{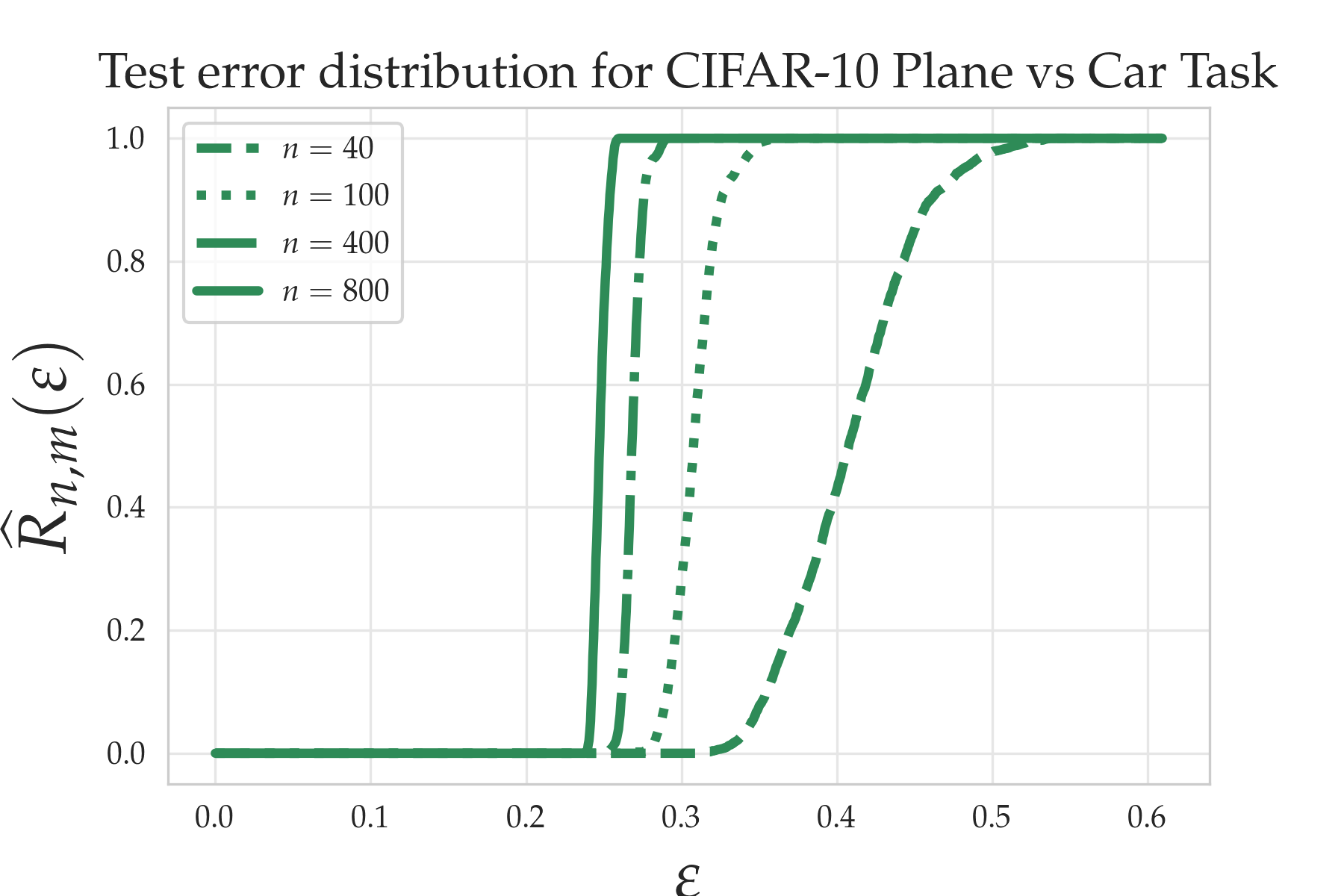}
    \caption{As an additional example, here we plot the test error distribution for linear classification on the plane vs car task on the \textsc{cifar10} dataset.}%
    \label{fig:example_APP_FOURTH_EX}%
\end{figure}
\end{document}